
\documentclass[12pt,journal,final,letterpaper,onecolumn]{article}
\usepackage[top=1in, bottom=1in, left=1in, right=1in]{geometry}
\usepackage{natbib}

\usepackage{setspace}
%\singlespacing
\onehalfspacing
%\doublespacing

\usepackage{microtype}
\usepackage{graphicx}
\usepackage{subfigure}
\usepackage{booktabs}
\usepackage{hyperref}

\usepackage{amssymb,amsfonts}

\usepackage{amsmath, amsthm}

\usepackage{hyperref}
\usepackage{amsfonts}
\usepackage{multirow}
\usepackage{makecell}

\usepackage{algorithm} 
\usepackage{algorithmic}  

\usepackage{amsmath}
\usepackage{mathtools}
\usepackage{amsfonts}

\usepackage{microtype}
\usepackage{graphicx}
\usepackage{booktabs} % for professional tables

\usepackage{caption}
\usepackage{pdfpages}
\usepackage{thmtools}
\usepackage{thm-restate}

\usepackage{stackengine}

\makeatletter
\newcommand{\distas}[1]{\mathbin{\overset{#1}{\kern\z@\sim}}}%

\newcommand{\beqs}{\vspace{0mm}\begin{eqnarray}}
\newcommand{\eeqs}{\vspace{0mm}\end{eqnarray}}
\newcommand{\barr}{\begin{array}}
\newcommand{\earr}{\end{array}}

\newcommand{\phiv}{\boldsymbol{\phi}}

\usepackage{booktabs}
\usepackage{siunitx}

\usepackage{color,soul}

\definecolor{alizarin}{rgb}{0.82, 0.1, 0.26}

\begin{document}
\title{Augment-Reinforce-Merge Policy Gradient\\ for Binary Stochastic Policy}
\author{Yunhao Tang$^*$, Mingzhang Yin$^\dagger$, and Mingyuan Zhou$^\dagger$\\$^*$Columbia University and $^\dagger$The University of Texas at Austin\\
 \texttt{\small yt2541@columbia.edu, mzyin@utexas.edu, mingyuan.zhou@mccombs.utexas.edu}}
 
\maketitle

\begin{abstract}
Due to the high variance of policy gradients, on-policy optimization algorithms are plagued with low sample efficiency. In this work, we propose Augment-Reinforce-Merge (ARM) policy gradient estimator as an unbiased low-variance alternative to previous baseline estimators on tasks with binary action space, inspired by the recent ARM gradient estimator for discrete random variable models \citep{yin2018arm}. We show that the ARM policy gradient estimator achieves variance reduction with theoretical guarantees, and leads to significantly more stable and faster convergence of policies parameterized by neural networks.
\end{abstract}

\section{Introduction}
There has been significant recent interest %and progress 
in
%Recent years have witnessed numerous successful applications of 
deep reinforcement learning (DRL) that combines reinforcement learning (RL) with powerful function approximators  such as neural networks, which leads to a wide variety of successful applications, ranging from board/video game playing %, video game playing 
to simulated/real life robotic control \citep{silver2016,mnih2013,schulman2015,levine2016}. One major area of DRL is on-policy optimization \citep{silver2016,schulman2015}, which progressively improves upon the current policy iterate until a local optima is found. 

As on-policy optimization flattens RL into a stochastic optimization problem, unbiased gradient estimation is carried out using REINFORCE or its more stable variants \citep{williams1992,mnih2016}. In general, however, on-policy gradient estimators suffer from high variance and need many more samples to construct high quality updates. Prior works have proposed variance reduction using variants of control variates \citep{gu2015muprop,gu2017interpolated,grathwohl2017backpropagation,wu2018variance}. %liu2018action
 However, recently \citet{tucker2018mirage} cast doubts on some aforementioned variance reduction techniques by showing that their implementation deviates from the proposed methods in the paper, which we will detail in the related work below.  In other cases, biased gradients are deliberately constructed to heuristically compute trust region policy updates \citep{schulman2017proximal}, which also achieve state-of-the-art performance on a wide range of tasks.
 %liu2018action
%yet recently \citet{tucker2018mirage} show that their implementations actually introduces bias to achieve the reported gains. 

In this work, we consider an unbiased policy gradient estimator based on the Augment-Reinforce-Merge (ARM) gradient estimator for binary latent variable models \citep{yin2018arm}.  We design a practical on-policy algorithm for RL tasks with binary action space, and show that the theoretical guarantee for variance reduction in \citet{yin2018arm} can be straightforwardly applied to the policy gradient setting. The proposed ARM policy gradient estimator is a plug-in alternative to REINFORCE gradient estimator and its variants \citep{mnih2016}, with minor algorithmic modifications. 

The remainder of the paper is organized as follows. In Section 2, we introduce background and related work on RL and  the ARM estimator for binary latent variable models. In Section 3, we describe the ARM policy gradient estimator,  including the derivation, theoretical guarantees, and on-policy optimization algorithm. In Section 4, we show via thorough experiments that our proposed estimator consistently outperforms stable baselines, such as the A2C \citep{mnih2016} and recently proposed RELAX \citep{grathwohl2017backpropagation} estimators.

\section{Background}
\subsection{Reinforcement Learning}
Consider a Markov decision process (MDP), where at time $t$ the agent is in state $s_t \in \mathcal{S}$, takes action $a_t \in \mathcal{A}$, transitions to next state $s_{t+1} \in \mathcal{S}$ according to $s_{t+1}\sim p(\cdot\,|\,a_t,s_t)$ and receives instant reward $r_t = r(s_t,a_t) \in \mathbb{R}$. A policy is a mapping $\pi : \mathcal{S} \mapsto \mathcal{P}(\mathcal{A})$ where $\mathcal{P}(\mathcal{A})$ is the space of distributions over the action space $\mathcal{A}$. The objective of RL is to search for a policy $\pi$ such that the expected discounted cumulative rewards $J(\pi)$ is maximized
\begin{align}
    J(\pi) = \mathbb{E}_{\pi}\left[\sum_{t=0}^{T-1}  r_t \gamma^t\right],
    \label{eq:rlobj}
\end{align}
where $\gamma \in (0,1]$ is a discount factor and $T$ is the horizon. Let $\pi^\ast = \arg\max_\pi J(\pi)$ be the optimal policy. For convenience, we define under policy $\pi$ the value function as $V^\pi(s) = \mathbb{E}[\sum_{t=0}^{T-1}  r_t \gamma^t \,|\,s_0 = s]$ and action-value function as $Q^\pi(s,a) = \mathbb{E}[\sum_{t=0}^{T-1}  r_t  \gamma^t \,|\,s_0 = s, a_0=a]$. We also define the advantage function as $A^\pi(s,a) = Q^\pi(s,a) - V^\pi(s,a)$. By construction, we have 
$\mathbb{E}_{a\sim \pi(\cdot\,|\,s)}[A^\pi(s,a)] = 0$, $i.e.$, the expected advantage $A^\pi(s,a)$ under policy $\pi$ is zero.

\subsection{On-Policy Optimization}
One way to find $\pi \approx \pi^\ast$ is through direct policy search. Consider parameterizing the policy $\pi_\theta$ with parameter $\theta \in \Theta$ where $\Theta$ is the space of parameters. If the policy class is expressive enough such that $\pi^\ast \in \{\pi_\theta\,|\, \theta \in \Theta\}$, we can recover $\pi^\ast$ with some parameter $\theta^\ast$. In on-policy optimization, we start with a random policy $\pi_\theta$ and iteratively improve the policy through gradient updates $\theta \leftarrow \theta + \alpha g_\theta$ for some learning rate $\alpha > 0$. We can compute the gradient
\begin{align}
g_\theta =  \mathbb{E}_\pi\left[\sum_{t=0}^{T-1} Q^\pi(s_t,a_t) \nabla_\theta \log \pi_\theta (a_t\,|\,s_t)\right].
\label{eq:reinforcedpg}
\end{align}
It is worth noting that $g_\theta$ is almost but not exactly the REINFORCE gradient of $J(\pi_\theta)$ \citep{williams1992}. The approximation $g_\theta \approx \nabla_\theta J(\pi_\theta)$ is due to the absence of discount factors $\gamma^t$ at each term in the summation of (\ref{eq:reinforcedpg}). In recent practice, $g_\theta$ is used instead of the exact REINFORCE gradient $\nabla_\theta J(\pi_\theta)$ since the factor $\gamma^t$ aggressively weighs down terms with large $t$ \citep{schulman2015,schulman2017,mnih2016}, which leads to poor empirical performance. In our subsequent derivations, we treat $g_\theta$ as the standard gradient and we let $\hat{g}_\theta = \sum_t Q^\pi(s_t,a_t) \nabla \log \pi_\theta(a_t\,|\,s_t)$ as an one-sample unbiased estimate of $g_\theta$ such that $\mathbb{E}[\hat{g}_\theta] = g_\theta$. However, $\hat{g}_\theta$ generally exhibits very high variance and does not entail stable updates. Actor-critic policy gradients \citep{mnih2016} subtract the original estimator  $\hat{g}_\theta$ by a state-dependent baseline function $b(s_t)$ as a control variate for variance reduction. A near-optimal choice of $b(s_t)$ is the value function $b(s_t) \approx V^\pi(s_t)$, which yields the following one-sample unbiased actor-critic gradient estimator
\begin{align}
    \hat{g}_\theta^{(\text{AC})} &= \sum_{t=0}^{T-1} (Q^\pi(s_t,a_t) - V^\pi(s_t)) \nabla_\theta \log \pi_\theta (a_t\,|\,s_t) \nonumber \\
    &= \sum_{t=0}^{T-1} A^\pi(s_t,a_t)  \nabla_\theta \log \pi_\theta (a_t\,|\,s_t),
    \label{eq:a2cpg}
\end{align}
where we still have $\mathbb{E}[\hat{g}_\theta^{(\text{AC})}] = g_\theta$. We also call (\ref{eq:a2cpg}) the A2C policy gradient estimator \citep{mnih2016}. In practice, the action-value function $Q^\pi(s_t,a_t)$ is estimated via Monte Carlo samples and the value function $V^\pi(s_t)$ is approximated by a parameterized critic $V_\phi(s_t) \approx V^\pi(s_t)$ with parameter $\phi$. 

\subsection{Augment-Reinforce-Merge Gradient for Binary Random Variable Models}
Discrete random variables are ubiquitous in probabilistic generative modeling. For the presentation of subsequent work, we limit our attention to the binary case. Let $z \sim \text{Bernoulli}(\sigma(\phiv))$ denote a binary random variable such that
\begin{align}
    p(z=1\,|\,\phiv) = \sigma(\phiv) = \frac{\exp(\phi)}{\exp(\phi)  + 1}
\end{align}
where $\phi$ is the logit of the Bernoulli probability parameter and $\sigma(\mathbf{\cdot})$ is the sigmoid function. For multi-dimensional distributions, we have a vector of $V \geq 2$ binary random variables $\mathbf{z}$ such that each component $\mathbf{z}_i$ follows an independent Bernoulli distribution $\text{Bernoulli}(\phiv_i)$, which is denoted as % we denote such distribution as 
$\mathbf{z} \sim \Pi_{i=1}^V\text{Bernoilli}(\phi_i)$. In general, we consider an expected optimization objective $f(\mathbf{z})$ of the vector $\mathbf{z}$ 
\begin{align}
    \max_{\phi_i,1\leq i\leq V} \mathbb{E}_{\mathbf{z} \sim \Pi_{i=1}^V\text{Bernoulli}(\phi_i)} [f(\mathbf{z})].
    \label{eq:discreteobj}
\end{align}
To optimize (\ref{eq:discreteobj}), we can construct a gradient estimator of $\phiv$ for iterative updates. Due to the discrete nature of variables $\mathbf{z}$, the REINFORCE gradient estimator is the naive baseline but its variance can be too high to be of practical use. The ARM (Augment-Reinforce-Merge) gradient estimator \citep{yin2018arm} provides the following alternative (See also \textbf{Theorem~1}in \citep{yin2018arm}) 

\begin{restatable}[ARM estimator for multivariate binary]{thm}{armbinary}
\label{thm:armbinary} For a vector of $V$ binary random variables $\mathbf{z}$, the gradient of 
\begin{align}
    \mathcal{E}_{{\phiv}} = \mathbb{E}_{\mathbf{z} \sim \Pi_{i=1}^V\emph{\text{Bernoulli}}(\phi_i)} [f(\mathbf{z})] \nonumber
\end{align}
with respect to ${\phiv}$, the logits of the Bernoulli distributions can be expressed as 
\begin{align}
    \nabla_{\phiv} \mathcal{E}(\phiv) = \mathbb{E}_{\mathbf{u} \sim \Pi_{i=1}^V \mathcal{U}(0,1)} \left[f_{\Delta}(\mathbf{u},\phiv)(\mathbf{u} - \frac{1}{2})\right],
    \label{eq:armbinary}
\end{align}
where $f_\Delta(\mathbf{u},\phiv) = f(\mathbf{z}_1) - f(\mathbf{z}_2), \mathbf{z}_1 = \mathbf{1}[\mathbf{u} > \sigma(-\phiv)], \mathbf{z}_2 = \mathbf{1}[\mathbf{u} < \sigma(\phiv)]$. Here $\mathbf{1}[\mathbf{u} > \sigma(-\phiv)]$ is a $V$-dimensional vector with the $i$th component to be $1[u_i > \sigma(-\phi_i)]$ where $1[\cdot]$ is the indicator function.
\end{restatable}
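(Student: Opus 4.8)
The plan is to follow the three-stage ``augment--reinforce--merge'' recipe that gives the estimator its name; by independence of the coordinates it suffices to compute a single partial derivative $\partial\mathcal{E}/\partial\phi_i$ and stack the results into $\nabla_{\phiv}\mathcal{E}$, so the argument is essentially one scalar computation with the logistic distribution. \textbf{Augment:} the obstacle with the plain threshold reparameterization $z_i=\mathbf{1}[u_i<\sigma(\phi_i)]$ is that $\phi_i$ sits inside an indicator, so REINFORCE does not apply directly; I would instead use the logistic augmentation $z_i=\mathbf{1}[\epsilon_i>0]$ with the $\epsilon_i$ independent, $\epsilon_i$ logistic of location $\phi_i$ and unit scale. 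Since $\Pr(\epsilon_i>0)=1-\sigma(-\phi_i)=\sigma(\phi_i)$ this reproduces $\text{Bernoulli}(\sigma(\phi_i))$, and now $\mathcal{E}(\phiv)=\mathbb{E}_{\boldsymbol{\epsilon}}[f(\mathbf{1}[\boldsymbol{\epsilon}>0])]$ with the $\phiv$-dependence moved into the smooth density $p(\boldsymbol{\epsilon}\,|\,\phiv)=\prod_j p(\epsilon_j\,|\,\phi_j)$.

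\textbf{Reinforce:} assuming mild regularity on $f$ so that $\partial/\partial\phi_i$ passes inside the integral (e.g.\ $f$ bounded plus dominated convergence), the score-function identity applies. Only $p(\epsilon_i\,|\,\phi_i)$ depends on $\phi_i$, and differentiating $\log p(\epsilon_i\,|\,\phi_i)=-(\epsilon_i-\phi_i)-2\log(1+e^{-(\epsilon_i-\phi_i)})$ gives the clean form $\partial_{\phi_i}\log p(\epsilon_i\,|\,\phi_i)=2\sigma(\epsilon_i-\phi_i)-1$, hence $\partial_{\phi_i}\mathcal{E}(\phiv)=\mathbb{E}_{\boldsymbol{\epsilon}}[f(\mathbf{1}[\boldsymbol{\epsilon}>0])(2\sigma(\epsilon_i-\phi_i)-1)]$. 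Reparameterizing once more via $u_j=\sigma(\epsilon_j-\phi_j)\sim\mathcal{U}(0,1)$, a one-line manipulation yields $\mathbf{1}[\epsilon_j>0]=\mathbf{1}[u_j>\sigma(-\phi_j)]$, so stacking over $i$ gives the intermediate ``augment--reinforce'' estimator $\nabla_{\phiv}\mathcal{E}(\phiv)=\mathbb{E}_{\mathbf{u}}[f(\mathbf{1}[\mathbf{u}>\sigma(-\phiv)])(2\mathbf{u}-\mathbf{1})]$.

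\textbf{Merge:} apply the measure-preserving involution $\mathbf{u}\mapsto\mathbf{1}-\mathbf{u}$ on $[0,1]^V$ to this estimator; using $\mathbf{1}-\sigma(-\phiv)=\sigma(\phiv)$ it becomes $-\,\mathbb{E}_{\mathbf{u}}[f(\mathbf{1}[\mathbf{u}<\sigma(\phiv)])(2\mathbf{u}-\mathbf{1})]$, a second unbiased expression for the same gradient. Averaging the two and collecting $f(\mathbf{1}[\mathbf{u}>\sigma(-\phiv)])-f(\mathbf{1}[\mathbf{u}<\sigma(\phiv)])=f_\Delta(\mathbf{u},\phiv)$ and $\tfrac12(2\mathbf{u}-\mathbf{1})=\mathbf{u}-\tfrac12$ gives exactly \eqref{eq:armbinary}.

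The one genuinely delicate point I anticipate is the multivariate bookkeeping in the merge step: the single substitution $\mathbf{u}\mapsto\mathbf{1}-\mathbf{u}$ must flip all $V$ coordinates simultaneously, whereas flipping coordinates individually would generate $2^V$ distinct antithetic estimators whose average is a different (and intuitively higher-variance) object --- so the ``merge'' has to be the full involution, not a coordinatewise one. The differentiation-under-the-integral exchange needs only a boundedness/integrability hypothesis on $f$, and the indicator ties $\{u_j=\sigma(\pm\phi_j)\}$ form a null set, so neither causes trouble. As an independent check one can bypass the three stages and verify the scalar ($V=1$) identity directly by splitting $(0,1)$ at the points $\sigma(\phi)$ and $\sigma(-\phi)$ and integrating $u-\tfrac12$ against $f(1)-f(0)$ over the three resulting sub-intervals, then tensorizing by independence; this also fixes all signs.
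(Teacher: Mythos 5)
Your proof is correct, and it follows exactly the augment--reinforce--merge route that the paper attributes to \citet{yin2018arm} (the paper itself only sketches this derivation and defers the details to that reference): logistic augmentation, the score-function step with $\partial_{\phi_i}\log p = 2\sigma(\epsilon_i-\phi_i)-1$, and the antithetic merge via the full involution $\mathbf{u}\mapsto \mathbf{1}-\mathbf{u}$ all check out, as does your $V=1$ sanity computation. No gaps.
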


The ARM gradient estimator was originally derived through a sequence of steps in \citet{yin2018arm}: first, an AR estimator is derived from augmenting the variable space (A) and applying REINFORCE (R); then a final merge step (M) is applied to several AR estimators for variance reduction. It was shown that through the merge step, the resulting ARM estimator is equivalent to the original AR estimator combined with an optimal control variate subject to certain constraints, which leads to substantial variance reduction with theoretical guarantees. We refer the readers to \citet{yin2018arm} for more details.

\subsection{Training Stochastic Binary Network}
One important application of the ARM gradient estimator is training stochastic binary neural networks. Consider a binary latent variable model with $T$ stochastic hidden layers $\mathbf{b}_{0:T-1}$ conditional on observations $\mathbf{x}$, we construct their joint distribution as 
\begin{align}
    q_{w_{0:T-1}}(\mathbf{b}_{0:T-1}\,|\,\mathbf{x}) = q_{w_1}(b_1\,|\,\mathbf{x}) \Pi_{t=1}^{T-1} q_{w_{t+1}}(\mathbf{b}_{t+1}\,|\,\mathbf{b}_t),
    \label{eq:stochasticnet}
\end{align}
where $\mathbf{b}_{0:T-1}$ are binary random variables and the conditional distributions $q_{w_{t+1}}(\mathbf{b}_{t+1}\,|\,\mathbf{b}_t)$ are Bernoulli distributions parameterized by $w_{t+1}$. In general, we construct the following objective in the form of (\ref{eq:discreteobj})
\begin{align}
     \mathcal{E}(w_{0:T-1}) &= \mathbb{E}_{\mathbf{b}_{0:T-1} \sim q_{w_{0:T-1}}} [f(\mathbf{b}_{0:T-1})]
     \label{eq:binarynet}
\end{align}
for some function $f(\mathbf{b}_{0:T-1})$.
In the context of variational auto-encoder (VAE) \citep{kingma2013}, $\mathcal{E}(w_{0:T-1})$ is the evidence lower bound (ELBO) \citep{blei2017}. We would like to optimize $\mathcal{E}(w_{0:T-1})$ using gradients $\nabla_{w_{0:T-1}}\mathcal{E}(w_{0:T-1})$, and this is enabled by the following ARM back-propagation theorem. %The original 
Proposition 6 in \citet{yin2018arm} addresses the VAE model, but there is no loss of generality when considering a general function $f(\mathbf{b}_{0:T-1})$ as in the following theorem.

\begin{restatable}[ARM Backpropagation]{thm}{armbackprop}
\label{thm:armbackprop} For a stochastic binary network with $T$ binary hidden layers, let $\mathbf{b}_0  = \mathbf{x}$, construct the conditional distributions as
\begin{align}
    q_{w_t} (\mathbf{b}_{t}\,|\,\mathbf{b}_{t-1}) = \emph{\text{Bernoulli}}(\sigma(\mathcal{T}_{w_{t}}(\mathbf{b}_{t-1}))),
\end{align}
for some function $\mathcal{T}_{w_t}(\cdot)$. Then the gradient of $\mathcal{E}(w_{0:T-1})$ w.r.t. $w_{t}$ can be expressed as 
\begin{align}
    &\nabla_{w_t} \mathcal{E}(w_{0:T-1}) = \mathbb{E}_{q(\mathbf{b}_{1:{t-1}})}\bigg[\mathbb{E}_{\mathbf{u}_t \sim \mathcal{U}(0,1)}\bigg[ \nonumber \\ f_\Delta (\mathbf{u}_t, \nonumber  &\mathcal{T}_{w_t}(\mathbf{b}_{t-1}),\mathbf{b}_{0:T-1-1})\left(\mathbf{u}_t - \frac{1}{2}\right)\bigg]\nabla_{w_t} \mathcal{T}_{w_t}(\mathbf{b}_{t-1})\bigg],
    \end{align}
where 
$
f_{\Delta}(\mathbf{u}_t,\mathcal{T}_{w_t}(\mathbf{b}_{t-1}),\mathbf{b}_{0:T-1-1}) = \mathbb{E}_{\mathbf{b}_{t+0:T-1}\sim q(\mathbf{b}_{t+0:T-1}\,|\,\mathbf{b}_t),~ \mathbf{b}_t = \mathbf{1}[\mathbf{u}_t > \sigma(-\mathcal{T}_{w_t}(\mathbf{b}_{t-1}))] }[f(\mathbf{b}_{0:T-1})]- \mathbb{E}_{\mathbf{b}_{t+0:T-1}\sim q(\mathbf{b}_{t+0:T-1}\,|\,\mathbf{b}_t),~ \mathbf{b}_t = \mathbf{1}[\mathbf{u}_t < \sigma(\mathcal{T}_{w_t}(\mathbf{b}_{t-1}))] }[f(\mathbf{b}_{0:T-1})]
$, which can be estimated via a single Monte Carlo sample.
\end{restatable}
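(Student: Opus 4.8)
The plan is to reduce Theorem~\ref{thm:armbackprop} to Theorem~\ref{thm:armbinary} by localizing the dependence of $\mathcal{E}(w_{0:T-1})$ on the single parameter block $w_t$ (for $1\le t\le T-1$). Using the factorization (\ref{eq:stochasticnet}) and the tower property, I would write the objective as a nested expectation
\begin{align*}
\mathcal{E}(w_{0:T-1}) = \mathbb{E}_{q(\mathbf{b}_{1:t-1})}\Big[\,\mathbb{E}_{\mathbf{b}_t\sim q_{w_t}(\cdot\,|\,\mathbf{b}_{t-1})}\big[\,h(\mathbf{b}_t,\mathbf{b}_{0:t-1})\,\big]\Big],
\end{align*}
where $h(\mathbf{b}_t,\mathbf{b}_{0:t-1}) := \mathbb{E}_{\mathbf{b}_{t+1:T-1}\sim q(\cdot\,|\,\mathbf{b}_t)}[f(\mathbf{b}_{0:T-1})]$ gathers everything downstream of layer $t$. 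The key observation is that $h$ depends only on $w_{t+1:T-1}$ and the realized values $\mathbf{b}_{0:t}$, never on $w_t$, and that the outer law $q(\mathbf{b}_{1:t-1})$ depends only on $w_{1:t-1}$; hence, interchanging $\nabla_{w_t}$ with the outer expectation (valid under the standard regularity assumptions), $\nabla_{w_t}\mathcal{E}(w_{0:T-1}) = \mathbb{E}_{q(\mathbf{b}_{1:t-1})}\big[\nabla_{w_t}\mathbb{E}_{\mathbf{b}_t\sim\mathrm{Bernoulli}(\sigma(\mathcal{T}_{w_t}(\mathbf{b}_{t-1})))}[h(\mathbf{b}_t,\mathbf{b}_{0:t-1})]\big]$.

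Next I would evaluate the inner gradient. Fix the conditioning variables $\mathbf{b}_{0:t-1}$ and set $\phiv = \mathcal{T}_{w_t}(\mathbf{b}_{t-1})$, the logit vector of layer $t$. For fixed $\mathbf{b}_{0:t-1}$ the map $\mathbf{b}_t\mapsto h(\mathbf{b}_t,\mathbf{b}_{0:t-1})$ is an ordinary deterministic function of a vector of independent Bernoulli variables, so Theorem~\ref{thm:armbinary} applies verbatim and yields
\begin{align*}
\nabla_{\phiv}\,\mathbb{E}_{\mathbf{b}_t\sim\mathrm{Bernoulli}(\sigma(\phiv))}\big[h(\mathbf{b}_t,\mathbf{b}_{0:t-1})\big] = \mathbb{E}_{\mathbf{u}_t\sim\mathcal{U}(0,1)}\big[\,h_\Delta(\mathbf{u}_t,\phiv)\,(\mathbf{u}_t-\tfrac12)\,\big],
\end{align*}
with $h_\Delta(\mathbf{u}_t,\phiv) = h(\mathbf{1}[\mathbf{u}_t>\sigma(-\phiv)],\mathbf{b}_{0:t-1}) - h(\mathbf{1}[\mathbf{u}_t<\sigma(\phiv)],\mathbf{b}_{0:t-1})$. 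Substituting the definition of $h$ back in shows $h_\Delta$ is precisely $f_\Delta(\mathbf{u}_t,\mathcal{T}_{w_t}(\mathbf{b}_{t-1}),\mathbf{b}_{0:t-1})$ as stated. Applying the chain rule through $\phiv = \mathcal{T}_{w_t}(\mathbf{b}_{t-1})$ then converts the $\phiv$-gradient into a $w_t$-gradient, producing the Jacobian factor $\nabla_{w_t}\mathcal{T}_{w_t}(\mathbf{b}_{t-1})$ contracted against the ARM vector; since $\mathcal{T}_{w_t}(\mathbf{b}_{t-1})$ does not depend on $\mathbf{u}_t$, this Jacobian pulls out of the $\mathbf{u}_t$-expectation, and plugging back into the outer expectation over $q(\mathbf{b}_{1:t-1})$ gives exactly the claimed identity.

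Finally I would justify the single-sample Monte Carlo claim: the outer expectation over $\mathbf{b}_{1:t-1}$ is taken along the one sampled trajectory, $\mathbf{u}_t$ is drawn once, and each of the two expectations inside $f_\Delta$ — over the downstream layers $\mathbf{b}_{t+1:T-1}$ conditioned on the two pseudo-states $\mathbf{1}[\mathbf{u}_t>\sigma(\mathcal{T}_{w_t}(\mathbf{b}_{t-1}))]$... i.e. $\mathbf{1}[\mathbf{u}_t>\sigma(-\phiv)]$ and $\mathbf{1}[\mathbf{u}_t<\sigma(\phiv)]$ — is replaced by a single forward pass, which preserves unbiasedness by linearity. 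I do not expect any deep obstacle here: once $w_t$'s influence is correctly isolated to a single Bernoulli layer, the result is Theorem~\ref{thm:armbinary} plus the chain rule plus the tower property, with no new estimator-level ideas. The part demanding the most care is the bookkeeping of which parameter enters which conditional and the justification for interchanging $\nabla_{w_t}$ with the nested expectations; an equivalent but messier alternative is to expand $\nabla_{w_t}\mathcal{E}$ by the full chain rule first and apply the ARM identity inside, but localizing first keeps the argument clean.
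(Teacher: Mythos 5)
Your proposal is correct and follows essentially the same route as the proof the paper relies on (Proposition 6 of \citet{yin2018arm}): condition on the upstream layers via the tower property, observe that $w_t$ enters only through the single Bernoulli layer $q_{w_t}(\mathbf{b}_t\,|\,\mathbf{b}_{t-1})$ while the downstream expectation $h$ and the outer law $q(\mathbf{b}_{1:t-1})$ are free of $w_t$, apply Theorem~\ref{thm:armbinary} to the logits $\phiv=\mathcal{T}_{w_t}(\mathbf{b}_{t-1})$, and finish with the chain rule. The only blemish is the momentary slip writing the first pseudo-state threshold as $\sigma(\mathcal{T}_{w_t}(\mathbf{b}_{t-1}))$ before correcting it to $\sigma(-\phiv)$, which you caught yourself.
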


\subsection{Related Work}
\paragraph{On-Policy Optimization.} On-policy optimization is driven by policy gradients with function approximation \citep{sutton1999}. Due to the non-differentiable nature of RL, REINFORCE gradient estimator \citep{williams1992} is the default policy gradient estimator. In practice, REINFORCE gradient estimator has very high variance and the updates can become unstable. Recently, \citet{mnih2016} propose advantage actor critic (A2C), which reduces the variance of the policy gradient estimator using a value function critic. Further, \citet{schulman2015high} introduce generalized advantage estimation (GAE), a combination of multi-step return and value function critic to trade-off the bias and variance in the advantage function estimation, in order to compute lower-variance downstream policy gradients.

\paragraph{Variance Reduction for Stochastic gradient estimator.} Policy gradient estimators are special cases of the general stochastic gradient estimation of an objective function written as an expectation $\mathbb{E}_{\mathbf{z}}[J(\mathbf{z})]$. To address the typical high-variance issues of REINFORCE gradient estimator, prior works have proposed to add control variates (or baseline functions) for variance reduction \citep{paisley2012variational,ranganath2014black,gu2015muprop,kucukelbir2016automatic}. Re-parameterization trick greatly reduces the variance when variables $\mathbf{z}$ are continuous and the underlying distribution is re-parametrizable \citep{kingma2013}. When variables $\mathbf{z}$ are discrete, \citet{maddison2016concrete,jang2016categorical} introduce a biased yet low-variance gradient estimator based on continuous relaxation of the discrete variables. More recently, REBAR \citep{tucker2017rebar} and RELAX \citep{grathwohl2017backpropagation} construct unbiased gradient estimators by using baseline functions derived from continuous relaxation of the discrete variables, whose parameters need to be estimated online. \citet{yin2018arm} propose an unbiased estimate motivated as a self-control baseline, and display substantial gains over prior works when $\mathbf{z}$ are binary variables. In this work, we borrow ideas from \cite{yin2018arm} and extend the ARM gradient estimator for binary stochastic network into ARM policy gradient for RL.

\paragraph{Variance Reduction for Policy Gradients.} By default, the baseline function for REINFORCE on-policy gradient estimator is only state dependent, and the value function critic is typically applied. \citet{gu2015muprop} propose Taylor expansions of the value functions as the baseline and construct an unbiased gradient estimator. \citet{grathwohl2017backpropagation,wu2018variance} 
%liu2018action
propose carefully designed action-dependent baselines, which can 
construct unbiased gradient estimator while in theory achieving more substantial variance reduction. Despite their reported success, \citet{tucker2018mirage} observe that subtle implementation decisions cause their code to diverge from the unbiased methods presented in the paper: \textbf{(1)} \citet{gu2015muprop,gu2017interpolated} achieve reported gains potentially by introducing bias into their advantage estimates. When such bias is removed, they do not outperform baseline methods; \textbf{(2)} \citet{liu2018action} achieve reported gains over state-dependent baselines potentially because the baselines are poorly trained. When properly trained, state-dependent baselines achieve similar results as the proposed action-dependent baselines; \textbf{(3)} \citet{grathwohl2017backpropagation} achieve gains potentially due to a bug that leads to different advantage estimators for their proposed RELAX estimator and the baseline. When this bug is fixed, they do not achieve significant gains. In this work, we propose ARM policy gradient estimator, a plug-in alternative which is unbiased and consistently outperforms A2C and RELAX for tasks with binary action space.

\section{Augment-Reinforce-Merge Policy Gradient}

Below we present the derivation of Augment-Reinforce-Merge (ARM) policy gradient. The high level idea is that we draw connections between RL and stochastic binary network - for a RL problem of horizon $T$, we interpret the action sequence $a_{0:T-1}$ as the stochastic $\mathbf{b}_{0:T-1}$ and derive the gradient estimator similarly as in Theorem \ref{thm:armbackprop}. We consider RL problems with binary action space $\mathcal{A} = \{0,1\}$.

\subsection{Time-Dependent Policy}
To make full analogy to stochastic binary networks with $T$ layers, we consider a RL problem with horizon $T$. We make the policy $\pi$ time-dependent by specifying a different policy $\pi_{\theta_t}$ for time $t$ with parameter $\theta_t$. We define a similar RL objective as (\ref{eq:rlobj})
\begin{align}
        J(\{\pi_{\theta_t}\}) = \mathbb{E}_{a_t \sim \pi_{\theta_t}(\cdot\,|\,s_t)}\left[\sum_{t=0}^{T-1}  r_t \gamma^t\right],
        \label{eq:timerlobj}
\end{align}
where we jointly optimize over all $\{\theta_t\},0\leq t\leq T-1$. To make the connection between (\ref{eq:timerlobj}) and (\ref{eq:binarynet}) explicit, we observe the following: we can interpret (\ref{eq:timerlobj}) as a special form of (\ref{eq:binarynet}) by setting the binary hidden variables as the actions $\mathbf{b}_t \coloneqq a_t$ and the observation $\mathbf{x}$ as the initial state $s_0$. The conditional distribution can be defined as $q_{w_t}(\mathbf{b}_t\,|\,\mathbf{b}_{t-1}) = q_{w_t}(a_t\,|\,a_{t-1}) \coloneqq \pi_{\theta_t}(a_t\,|\,s_t) p(s_t\,|\,a_{t-1},s_{t-1})$, which consists of the policy $\pi_{\theta_t}(a_t\,|\,s_t)$ and the transition dynamics $p(s_t\,|\,a_{t-1},s_{t-1})$. Finally we define the objective function $f(\mathbf{b}_{0:T-1}) = f(a_{0:T-1}) \coloneqq \mathbb{E}_{s_t \sim p(\cdot\,|\,a_t,s_{t-1})}[\sum_{t=0}^{T-1} r(s_t,a_t) \gamma^t]$, which depends only directly on $a_{0:T-1}$ (after marginalizing out the states $s_{0:T-1}$).

Since the action space is binary, we introduce a policy parameterization similar to stochastic binary network, $i.e.$,  $\pi(\cdot\,|\,s_t) = \text{Bernoulli}(\sigma(\mathcal{T}_{\theta_t}(s_t)))$. For any given $t = t^\ast$, consider the ARM estimator of (\ref{eq:timerlobj}) w.r.t. $\theta_{t^\ast}$ according to Theorem \ref{thm:armbackprop}. When we sample $a_t \sim \pi_{\theta_t}(\cdot\,|\,s_t)$, we can sample a uniform random variable $u_t \sim \mathcal{U}(0,1)$ then set $a_t = 1[u_t < \sigma(\mathcal{T}_{\theta_t}(s_t))]$. We define the \emph{pseudo action} as $a_t^{(s)} = 1[u_t >  \sigma(-\mathcal{T}_{\theta_t}(s_t))]$. By converting all variables in the binary stochastic network example into their RL counterparts as described above, we can derive the gradient of (\ref{eq:timerlobj}) w.r.t. $\theta_t$ for any given $t = t^\ast$
\begin{align}
    \nabla_{\theta_{t^\ast}} J(\{\pi_{\theta_t}\}) =  \mathbb{E}_{u_t^\ast \sim U(0,1)}[&(Q^{\{\pi_t\}}(s_t,a_{t^\ast}^{(s)}) - Q^{\{\pi_t\}}(s_t,a_{t^\ast})) (u_t^\ast - \frac{1}{2})]  \nabla_{\theta_{t^\ast}} \mathcal{T}_{\theta_{t^\ast}}(s_{t^\ast}),
    \label{eq:armgrad}
\end{align}
where $Q^{\{\pi_t\}}(s,a)$ is defined as $Q^{\{\pi_t\}}(s,a) = \mathbb{E}_{a_t \sim \pi_{\theta_t}(\cdot\,|\,s_t)}[\sum_{t=0}^{T-1} r_t\gamma^t \,|\,s_0=s,a_0=a]$, $i.e.$,  the expected cumulative rewards obtained by first executing $a_0 = a$ at $s_0 = s$ and following the time-dependent policy thereafter. Notice that this is not exactly the same as the action-value function we defined in Section 2.1 since the policy is time-dependent.

\subsection{Stationary Policy}
In practice we are interested in a stationary policy, which is invariant over time $a_t \sim \pi_\theta(\cdot\,|\,s_t)$. Since we have derived the gradient estimator for a time-dependent policy $\{\pi_{\theta_t}\}$, the most naive approach would be to share weights by letting $\theta_{t} \coloneqq \theta$ and linearly combining the per-step gradients (\ref{eq:armgrad}) across time steps. Since now the policy is stationary, we define $g_\theta^{(\text{ARM})}(t)$ as the stationary version of (\ref{eq:armgrad}) where  $Q^{\{\pi_t\}}$ is replaced by $Q^\pi$
\begin{align}
    g_\theta^{(\text{ARM})}(t^\ast) = \mathbb{E}_{u_t^\ast \sim U(0,1)}[&(Q^\pi(s_t,a_{t^\ast}^{(s)}) - Q^\pi(s_t,a_{t^\ast})) \nonumber \\ &(u_t^\ast - \frac{1}{2})]  \nabla_{\theta_{t^\ast}} \mathcal{T}_{\theta_{t^\ast}}(s_{t^\ast})].
    \label{eq:stationaryarmgrad}
\end{align}
The combined gradient is 
\begin{align}
    g_\theta^{(\text{ARM})} \coloneqq \sum_{t=0}^{T-1}  g_\theta^{(\text{ARM})}(t)
    \label{eq:fullgrad}
\end{align}
We denote $\hat{g}_\theta^{(\text{ARM})}(t)$ and $\hat{g}_\theta^{(\text{ARM})}$ the unbiased sample estimates of $g_\theta^{(\text{ARM})}(t)$ and $g_\theta^{(\text{ARM})}$ respectively. Importantly, we can show that $\hat{g}_\theta^{\text{(ARM)}}$ is unbiased, $i.e.$,  $\mathbb{E}[\hat{g}_\theta^{\text{(ARM)}}] =  g_\theta$ where $g_\theta$ is the standard gradient defined in Section 2.2. We summarize the result in the following theorem.

\begin{restatable}[Unbiased ARM policy gradient]{thm}{armpg}
\label{thm:armpg} When the ARM policy gradient is constructed as in (\ref{eq:fullgrad}), it is unbiased w.r.t. the true  gradient $g_\theta$ of the RL objective
\begin{align}
    \mathbb{E}[\hat{g}_\theta^{\text{(ARM)}}] = g_\theta\nonumber
\end{align}
\end{restatable}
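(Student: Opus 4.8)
The plan is to reduce the claim to \textbf{Theorem~\ref{thm:armbackprop}} applied term by term. First I would recall that the RL objective under the time-dependent policy has been explicitly embedded into the stochastic binary network framework in Section 3.1: setting $\mathbf{b}_t \coloneqq a_t$, $\mathbf{x} \coloneqq s_0$, $q_{w_t}(\mathbf{b}_t\,|\,\mathbf{b}_{t-1}) \coloneqq \pi_{\theta_t}(a_t\,|\,s_t)\,p(s_t\,|\,a_{t-1},s_{t-1})$, and $f(a_{0:T-1}) \coloneqq \mathbb{E}_{s_t\sim p(\cdot\,|\,a_t,s_{t-1})}[\sum_t r(s_t,a_t)\gamma^t]$, the objective $J(\{\pi_{\theta_t}\})$ in (\ref{eq:timerlobj}) is literally of the form $\mathcal{E}(w_{0:T-1})$ in (\ref{eq:binarynet}). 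Hence Theorem~\ref{thm:armbackprop} applies verbatim and gives, for each fixed $t^\ast$, that $\nabla_{\theta_{t^\ast}} J(\{\pi_{\theta_t}\})$ equals the right-hand side of (\ref{eq:armgrad}); in particular the $f_\Delta$ term collapses to $Q^{\{\pi_t\}}(s_{t^\ast},a_{t^\ast}^{(s)}) - Q^{\{\pi_t\}}(s_{t^\ast},a_{t^\ast})$ once the states $s_{t^\ast+1:T-1}$ are marginalized out, because the conditional expectation of $f$ given $a_{0:t^\ast}$ is exactly this $Q$-function by its definition. So each per-step estimator $\hat g_\theta^{(\text{ARM})}(t^\ast)$, evaluated at a shared parameter $\theta_t\equiv\theta$, is an unbiased one-sample estimate of $\nabla_{\theta_{t^\ast}} J(\{\pi_{\theta_t}\})\big|_{\theta_t\equiv\theta}$.

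Next I would handle the passage from the time-dependent gradient to the stationary one. The key observation is the chain rule / weight-sharing identity: if we define $\tilde J(\theta) \coloneqq J(\{\pi_{\theta_t}\})\big|_{\theta_0=\cdots=\theta_{T-1}=\theta}$, then
\begin{align}
\nabla_\theta \tilde J(\theta) = \sum_{t=0}^{T-1} \nabla_{\theta_t} J(\{\pi_{\theta_t}\})\Big|_{\theta_0=\cdots=\theta_{T-1}=\theta}. \nonumber
\end{align}
Combining this with the per-step identity from the previous paragraph shows that $g_\theta^{(\text{ARM})} = \sum_t g_\theta^{(\text{ARM})}(t) = \nabla_\theta \tilde J(\theta)$, so $\mathbb{E}[\hat g_\theta^{(\text{ARM})}] = \nabla_\theta \tilde J(\theta)$.

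The last step is to identify $\nabla_\theta \tilde J(\theta)$ with the standard policy gradient $g_\theta$ of (\ref{eq:reinforcedpg}). Here I would note that $\tilde J(\theta)$ is exactly the stationary RL objective (\ref{eq:rlobj}) with $\pi_\theta$, and the standard REINFORCE-style identity gives $\nabla_\theta \tilde J(\theta) = \mathbb{E}_\pi[\sum_t Q^\pi(s_t,a_t)\nabla_\theta\log\pi_\theta(a_t\,|\,s_t)]$ — but with the caveat already flagged in Section 2.2 that $g_\theta$ omits the $\gamma^t$ discount factors inside the score-function terms. To make the equality $\mathbb{E}[\hat g_\theta^{(\text{ARM})}] = g_\theta$ hold on the nose rather than up to those discount factors, I would either (a) absorb $\gamma^t$ into the definition of $f$ per layer as the excerpt does, so the per-step ARM estimator already carries the matching discounting, or (b) state the result, as the paper does for $g_\theta$ itself, under the convention that both sides use the undiscounted-score approximation; consistency of conventions between the ARM side and the $g_\theta$ side is what needs to be checked.

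The main obstacle I expect is precisely this bookkeeping of discount factors and the non-stationarity of $Q^{\{\pi_t\}}$ versus $Q^\pi$: Theorem~\ref{thm:armbackprop} produces the time-dependent action-value $Q^{\{\pi_t\}}$, and replacing it by the stationary $Q^\pi$ in (\ref{eq:stationaryarmgrad}) is only valid once the weights are tied ($\theta_t\equiv\theta$), at which point the time-dependent and stationary policies coincide and $Q^{\{\pi_t\}}(s,a)=Q^\pi(s,a)$ at every state reachable at time $t$; I would need to argue this carefully, ensuring the state distribution under which each per-step gradient is evaluated matches the on-policy distribution induced by $\pi_\theta$. Everything else — linearity of expectation for the sum over $t$, and unbiasedness of the single Monte Carlo sample for $f_\Delta$ — is routine given Theorems~\ref{thm:armbinary} and~\ref{thm:armbackprop}.
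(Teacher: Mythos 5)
Your plan is sound and would yield the result, but it takes a genuinely different route from the paper's. The paper does not invoke Theorem~\ref{thm:armbackprop} at all in its proof: it fixes a state $s_t$, writes $\phi = \mathcal{T}_\theta(s_t)$, $\pi_1 = \sigma(\phi)$, $A_1 = A^\pi(s_t,1)$, $A_0 = A^\pi(s_t,0)$, and directly integrates the per-step ARM estimator over the uniform noise $u$, obtaining $\mathbb{E}_u[\hat{g}_\theta^{(\text{ARM})}(t)\,|\,s_t] = A_1\pi_1\nabla_\theta\phi$ (using $A_1\pi_1 + A_0\pi_0 = 0$); it then computes the standard per-step gradient $A_1\nabla_\theta\pi_1 + A_0\nabla_\theta\pi_0 = A_1\pi_1\nabla_\theta\phi$ and observes the two agree, after which marginalizing over $s_t$ and summing over $t$ finishes the argument. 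This elementary binary-case calculation buys exactly the thing you flag as your main obstacle: because both conditional expectations are written directly with $Q^\pi(s_t,\cdot)$ in the time-reset convention of Section~2.1, the $\gamma^t$ bookkeeping never arises --- the theorem is a statement about $g_\theta$ (which already omits the $\gamma^t$ score weights), not about $\nabla_\theta J$. Your route through Theorem~\ref{thm:armbackprop} plus the weight-sharing chain rule is more structural and would generalize more readily beyond the binary case, but executed literally it produces $\nabla_\theta \tilde J(\theta)$, which carries the $\gamma^{t}$ factors and is not equal to $g_\theta$; to land on the stated claim you must, as your option (b) anticipates, re-derive the per-step ARM identity in the same reset convention used for $g_\theta(t)$ --- at which point you are essentially forced into the paper's direct conditional computation anyway. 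Your handling of the $Q^{\{\pi_t\}}$ versus $Q^\pi$ issue (they coincide once $\theta_t\equiv\theta$) is correct and matches what the paper does implicitly.
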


\begin{proof}
Recall the standard gradient $g_\theta$ in (\ref{eq:reinforcedpg}), we define $
    g_\theta(t) \coloneqq  \mathbb{E}_{\pi_\theta}[ Q^{\pi_\theta}(s_t,a_t) \nabla_\theta \log \pi(a_t\,|\,s_t)]$. Now we show that by combining ARM gradients across all time steps $t$ (which correspond to all layers of a stochastic binary network), we can compute an unbiased policy gradient.
Recall the gradient at $t$ is computed based on (\ref{eq:stationaryarmgrad}), we explicitly compute the gradient in the  following. For simplicity, we remove all dependencies on $t$ and denote $\phi = \mathcal{T}_{\theta}(s_t)$, $\pi_1 = \pi(a_t=1\,|\,s_t) = \frac{\exp(\phi)}{1+\exp(\phi)}, ~\pi_0 = 1 - \pi_1$. Also the advantage function $A_1 = A^\pi(a_t=1,s_t), A_0 = A^\pi(a_t=0,s_t)$. Assume also $\pi_1 \geq \pi_0$ without loss of generality,
\begin{align}
    (\ref{eq:stationaryarmgrad}) &= -\int_{u > \pi_1} \frac{A_1}{\pi_1} (u - \frac{1}{2}) \nabla_\theta \phi du  \nonumber\\ &\ \ \ \ -\int_{u < \pi_0} \frac{A_0}{\pi_0} (u-\frac{1}{2}) \nabla_\theta \phi du \nonumber \\
    &= -(\frac{1}{2}A_0\pi_0 - \frac{1}{2} A_1\pi_1) \nabla_\theta \phi \nonumber \\
    &= A_1\pi_1 \nabla_\theta \phi
\end{align}
We also explicitly write down the standard gradient $g_\theta(t)$ at time step $t$ with the same notation
\begin{align}
    g_\theta(t) &= A_1 \nabla_\theta \pi_1 + A_0 \nabla_\theta \pi_0 = A_1 \pi_1 \nabla_\theta \phi. \nonumber
\end{align}
We see that conditional on the same state $s_t$, the ARM policy gradient sample estimate $\hat{g}_\theta^{(\text{ARM})}(t)$ at time step $t$ has its expectation equal to the standard gradient $g_\theta(t)$ at time $t$. To complete the proof, we marginalize out $s_t$ via the visitation probability under current policy $\pi_\theta$ and sum over time steps: $\mathbb{E}[\hat{g}_\theta^{(\text{ARM})}] = \mathbb{E}[\sum_t \hat{g}_\theta^{(\text{ARM})}(t)] = \sum_t g_\theta(t) = g_\theta$.
\end{proof}

\subsection{Variance Reduction for ARM Policy Gradient}
Here we compare the variance of the ARM policy gradient estimator $\hat{g}_\theta^{(\text{ARM})}$ against the REINFORCE gradient estimator $\hat{g}_\theta$. Analyzing the variance of the full gradient is very challenging, since we need to account for covariance between gradient components across time steps. We settle for analyzing the variance of each time component $\hat{g}_\theta^{(\text{ARM})}(t)$ and $g_\theta(t)$ for any $t \geq 0$. Similar analysis has been applied in \citet{fujita2018clipped}.

For simple analysis, we assume that all action-value functions (or advantage functions) can be obtained exactly, $i.e.$,  we do not consider additional bias and variance introduced by advantage function estimations. Conditional on a given $s_t$, we can show with results from \citet{yin2018arm} that
\begin{align}
    \frac{\sup_{\theta} \mathbb{V}[\hat{g}_\theta^{(\text{ARM})}(t)\,|\,s_t]}{\sup_{\theta} \mathbb{V}[\hat{g}_\theta(t)\,|\,s_t]} \leq \frac{16}{25}.
    \label{eq:statevariance}
\end{align}
Since we have established $\mathbb{E}[\hat{g}_\theta^{(\text{ARM})}(t)\,|\,s_t] = \mathbb{E}[\hat{g}_\theta(t)\,|\,s_t]$, we can show via the variance decomposition formula $\mathbb{V}[g] = \mathbb{E}[\mathbb{V}[g\,|\,s]] + \mathbb{V}[\mathbb{E}[g\,|\,s]]$ that for $\forall t\geq 0$
\begin{align}
    \sup_{\theta} \mathbb{V}[\hat{g}_\theta^{(\text{ARM})}(t)] \leq \sup_{\theta} \mathbb{V}[\hat{g}_\theta(t)].
    \label{eq:fullvariance}
\end{align}
We cannot obtain a tighter bound without making further assumptions on the relative scale of variance $\mathbb{V}[\hat{g}_\theta(t)\,|\,s_t]$ and the expectations $\mathbb{E}[\hat{g}_\theta(t)\,|\,s_t]$. Still, we can show that the ARM policy gradient estimator can achieve potentially much smaller variance than the standard gradient estimator, which entails faster policy optimization in practice.

We provide an intuition for why ARM policy gradient estimator can achieve substantial variance reduction and stabilize training in practice. In Figure \ref{figure:actiondiff}, we show the running average of $|a_t - a_t^{(s)}|$, $i.e.$,  the difference between on-policy actions and pseudo actions as training progresses. We show two settings of advantage estimations (left: A2C, right: GAE) which will be detailed in Section 4. We see that as training progresses, both actions are increasingly less likely to be different, often yielding $a_t = a_t^{(s)}$. In such cases the ARM policy gradient estimator achieves exact zero value  $\hat{g}_\theta^{(\text{ARM})} = 0$. On the contrary, prior baseline gradient estimators (\ref{eq:reinforcedpg},\ref{eq:a2cpg}) cannot take up exact zero values and will cause parameters to stumble around due to noisy estimates. We provide more detailed discussions in the Appendix.

\begin{figure}
\centering
\subfigure[\textbf{A2C}]{\includegraphics[width=.45\linewidth]{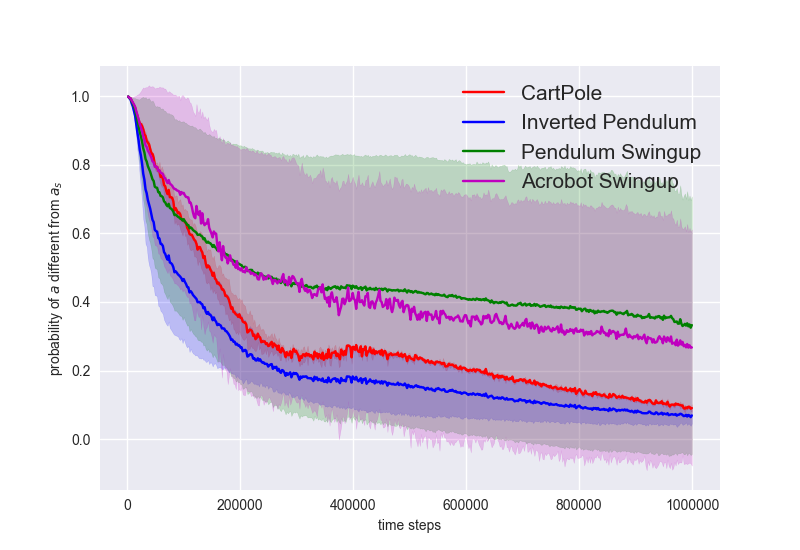}}
\subfigure[\textbf{GAE }]{\includegraphics[width=.45\linewidth]{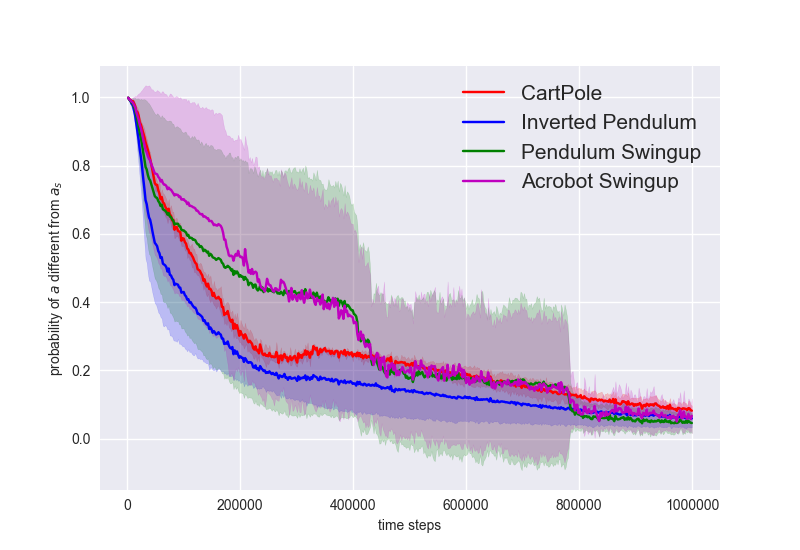}}
\caption{\small{Illustration of the frequency of $a_t^{(s)}$ different from $a_t$ as training progresses. The x-axis show the time steps in training, y-axis show the running average of $|a_t^{(s)} - a_t|$, a measure of how often they differ. Each curve shows the result for a different environment and we show the settings both for A2C (left) and GAE (right). As training progresses, on-policy actions $a_t$ are increasingly less likely to differ from the pseudo action $a_t^{(s)}$.}}
\label{figure:actiondiff}
\end{figure}

\subsection{Algorithm}
Here we present a practical algorithm that applies the ARM policy gradient estimator. The on-policy optimization procedure should alternate between collecting on-policy samples using current policy  and computing gradient estimator using these on-policy samples for parameter updates \citep{schulman2015,schulman2017}. 

We see that a primary difficulty with computing (\ref{eq:stationaryarmgrad}) is that it requires  the difference of two action-value functions $Q^\pi(s_t,a_t^{(s)}) - Q^\pi(s_t,a_t)$. Unless the difference is estimated by additional parameterized critics, in typical on-policy algorithm implementation, we only have access to the Monte Carlo estimators of action-value functions corresponding to on-policy actions. To overcome this, we estimate the difference by using the property  $\mathbb{E}_{a \sim \pi(\cdot\,|\,s)}[A^\pi(s,a)] = 0$. To be specific, when $a_t^{(s)} \neq a_t$, the advantage function of the pseudo action can be expressed as 
\begin{align}
A^{\pi}(s_t,a_{t}^{(s)}) = - \frac{\pi(a_t\,|\,s_t)}{1 - \pi(a_t\,|\,s_t)} A^{\pi}(s_t,a_{t})
\end{align}

Since the difference of action-value functions is identical to the difference of advantage functions, we have in general

\begin{align}
    Q^{\pi}(s_t,a_{t}^{(s)}) - %&
    Q^{\pi}(s_t,a_{t}) %= \nonumber \\ &
    =-(\frac{\pi(a_t\,|\,s_t)}{1 - \pi(a_t\,|\,s_t)} + 1)   A^\pi(s_t,a_t) |a_{t}^{(s)} -a_t|
    \label{eq:adv}
\end{align}
 We can hence estimate the difference in (\ref{eq:stationaryarmgrad}) with only on-policy advantage estimates $\hat{A}^\pi(s_t,a_t) \approx A^\pi(s_t,a_t)$, along with sampled pseudo action $a_t^{(s)}$ and on-policy action $a_t$.  Notice when $a_t^{(s)} = a_t$, the difference $Q^{\pi}(s_t,a_{t}^{(s)}) - Q^{\pi}(s_t,a_{t}) = 0$, therefore the gradient is zero with high frequency when the algorithm approaches  convergence, which leads to faster convergence speed and better stability.  

We design the on-policy optimization procedure as follows. At any iteration, we have policy $\pi_{\theta}$ with parameter $\theta$. We generate on-policy rollout at time $t$ by sampling a uniform random variable $u_t \sim \mathcal{U}(0,1)$, then construct  on-policy action $a_t = 1[u_t \leq \sigma(\mathcal{T}_\theta(s_t))]$ and pseudo action $a_t^{(s)} = 1[u_t \geq \sigma(-\mathcal{T}_\theta(s_t))]]$. Only the on-policy actions $a_t$ are executed in the environment, which returns  instant rewards $r_t$. We estimate the advantage functions $A^{\pi_\theta}(s_t,a_t)$ from on-policy samples using techniques from A2C \citep{mnih2016} or GAE \citep{schulman2015high}, and replace $A^\pi(s_t,a_t)$ in (\ref{eq:stationaryarmgrad}) by estimates $\hat{A}^\pi(s_t,a_t)$. The details of the advantage estimators are provided in the Appendix. Finally, the difference of the action-value functions can be computed using (\ref{eq:adv}) based purely on on-policy samples. With all the above components, we compute the ARM policy gradient (\ref{eq:stationaryarmgrad}) to update the policy parameters. The main algorithm is summarized in \textbf{Algorithm 1} in the Appendix.

\paragraph{ARM policy gradient as a plug-in alternative.} We note here that despite minor differences in the algorithm ($e.g.$,  need to sample pseudo actions $a_t^{(s)}$ along with $a_t$), ARM policy gradient estimator is a convenient plug-in alternative to other baseline on-policy gradient estimators (\ref{eq:reinforcedpg},\ref{eq:a2cpg}). All the other components of standard on-policy optimization algorithms ($e.g.$,  value function baselines) remain the same.

\section{Experiments}

We aim to address the following questions through the experiments: \textbf{(1)}  Does the proposed ARM policy gradient estimator outperform previous policy gradient estimators on binary action benchmark tasks? \textbf{(2)} How sensitive are these policy gradient estimators to hyperparameters, $e.g.$,  the size of the sample batch size? 

To address \textbf{(1)}, we compare the ARM policy gradient estimator with A2C gradient estimator \citep{mnih2016} and RELAX gradient estimator  \citep{grathwohl2017backpropagation}. We aim to study how advantage estimators affect the quality of the downstream gradient estimators: with A2C advantage estimation, we compare $\{\text{ARM},\text{A2C},\text{RELAX}\}$ gradient estimators; with GAE, we compare $\{\text{ARM},\text{A2C}\}$ gradient estimators \footnote{Here for A2C gradient estimator, we just replace the original A2C advantage estimators by GAE estimators and keep all other algorithmic components the same.}. We evaluate on-policy optimization with various gradient estimators on benchmark tasks with binary action space: for each policy gradient estimator, we train the policy for a fixed number of time steps ($10^6$) and across 5 random seeds. Each curve in the plots below shows the mean performance with shaded area as the standard deviation performance across seeds. In Figures \ref{figure:a2c} and \ref{figure:gae}, x-axis show the number of time steps at training time and y-axis show the performance. The results are reported in Section 4.1. To address \textbf{(2)}, we vary the batch size $\mathcal{B}$ to assess the effects of the batch size on the variance of the policy gradient estimators. We also evaluate the estimators' sensitivities to learning rates. The results are reported in Sections 4.2 and 4.3.

\paragraph{Benchmark Environments.} We focus on benchmark environments with binary action space illustrated in Figure \ref{figure:tasks}: All tasks are simulated by OpenAI gym and DeepMind control suite \citep{todorov2008,brockman2016,tassa2018deepmind}. Some tasks have binary action space by default. In cases where the action space is a real interval, $e.g.$,  $\mathcal{A} = [-1,1]$ for Pendulum, we binarize the action space to be $\mathcal{A} = \{-1,1\}$.

%In CartPole, the task consists of balancing a pole by applying forces to a cart which is constrained to move along an horizontal track. We have various versions of CartPole which correspond to different difficulty levels: higher versions imply more difficulty. For CartPole and Mountaincar, the tasks have binary action space by design. In Inverted Pendulum, Acrobot Swingup and Pendulum Swingup, the original task has a continuous action space $\mathcal{A} = [-1,1]$, we binarize the task by constraining the agent to only take $a \in \mathcal{A} = \{-1,1\}$.

\begin{figure*}[t]
\centering
\subfigure[\textbf{CartPole }]{\includegraphics[keepaspectratio,width=.19\linewidth]{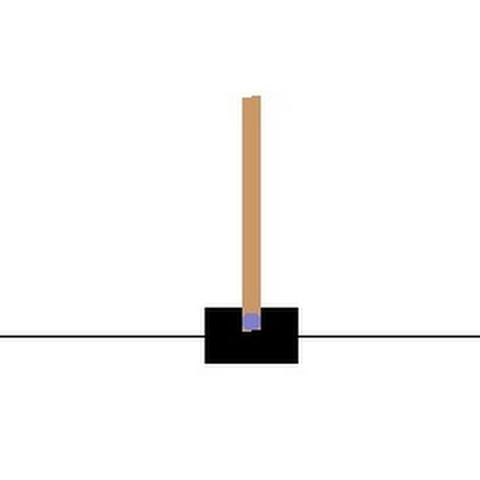}}
\subfigure[\textbf{MountainCar }]{\includegraphics[keepaspectratio,width=.19\linewidth]{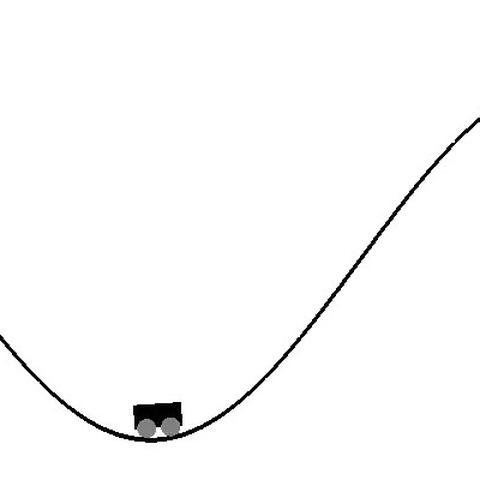}}
\subfigure[\textbf{Inverted Pendulum }]{\includegraphics[width=.19\linewidth]{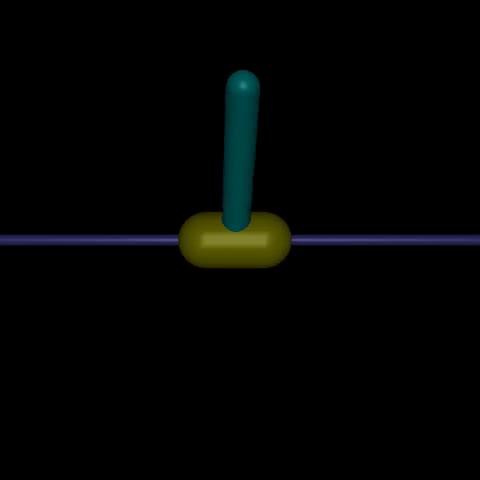}}
\subfigure[\textbf{Acrobot Swingup }]{\includegraphics[keepaspectratio,width=.19\linewidth]{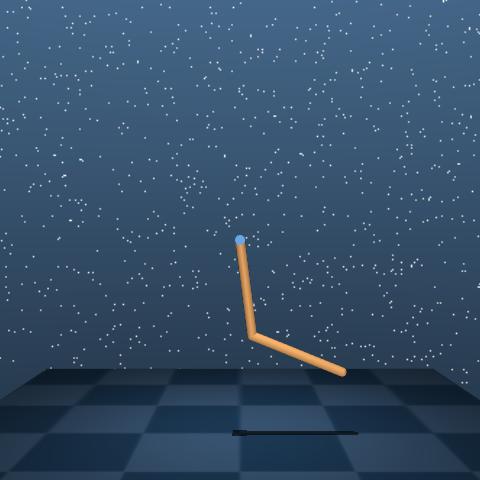}}
\subfigure[\textbf{Pendulum Swingup }]{\includegraphics[keepaspectratio,width=.19\linewidth]{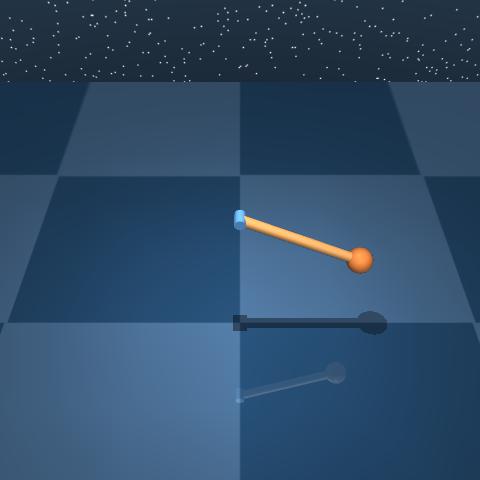}}
\caption{\small{Illustration of Benchmark tasks. Benchmark tasks (a)-(c) are from OpenaAI gym \citep{brockman2016} and (d)-(e) are from DeepMind Control Suite \citep{tassa2018deepmind}. These tasks have binary action space by design ((a)) or the action space is by design continuous $\mathcal{A} = [-1,1]$ and binarized for our setting ((b)-(e)).}}
\label{figure:tasks}
\end{figure*}

\paragraph{Implementations and Hyper-parameters.} All implementations are in Tensorflow \citep{abadi2016tensorflow} and RL algorithms are based on OpenAI baselines \citep{baselines} \footnote{\url{https://github.com/openai/baselines}}.
 All policies are optimized with Adam using best learning rates selected from $\in \{3\cdot 10^{-4},3\cdot 10^{-5}\}$. We refer to the original code of RELAX\footnote{\url{https://github.com/wgrathwohl/BackpropThroughTheVoidRL}} 
 but notice potential % errors 
 issues in their original implementation. We discuss such potential issues in the Appendix. We implement our own version of RELAX \citep{grathwohl2017backpropagation} by modifying the OpenAI baselines 
  \citep{baselines}. Recall that on-policy optimization algorithms alternate between collecting batch samples of size $\mathcal{B}$ and then compute gradient estimators based on these samples; here we set $\mathcal{B} = 2048$ by default.

%For the RELAX baseline we reference the code released by the original work \citep{grathwohl2017backpropagation} but implement our own

\paragraph{Policy Architectures.} We parameterize the logit function $\mathcal{T}_\theta(s)$ as a two-layer neural network with state $s$ as input and $64$ hidden units per layer. Each layer applies $\text{ReLU}$ non-linearity as the activation function. The output is a logit scalar $\mathcal{T}_\theta(s)$ where $\theta$ consists of weight matrices and biases in the neural network. For variance reduction we also parameterize a value function baseline with two hidden layers each with $64$ units, with $\text{relu}$ non-linear activation. Both the logit function and the value function have linear activation for the output layer. For RELAX gradient estimator, we use two  parameterized baseline functions with two layers, each with $64$ hidden units.

\subsection{Benchmark Comparison}
Here we compare the ARM policy gradient estimator with two baseline methods: A2C gradient estimator \citep{mnih2016} and the unbiased RELAX gradient estimator \citep{grathwohl2017backpropagation}. %,tucker2018mirage}. 
We note some critical implementation details: All gradient estimators require advantage estimation $\hat{A}(s,a) \approx A^\pi(s,a)$, we do not normalize the advantage estimates before computing the policy gradients as commonly implemented in \citet{baselines}. As observed in \citet{tucker2018mirage}, such normalization biases the original gradients for variance reduction, especially for action dependent baselines such as RELAX \citep{grathwohl2017backpropagation}.
Since our focus is on unbiased gradient estimators, we remove such normalization \citep{baselines} \footnote{Since GAE applies a combination of Monte Carlo returns and value function critics, the advantage estimator is still slightly biased to achieve small variance.}. 

\paragraph{A2C Advantage Estimator.} A2C Advantage estimators are simple combinations of Monte Carlo sampled returns and baseline functions. For $\{\text{ARM},\text{A2C}\}$ the baseline is the value function critic, while for RELAX the baseline is parameterized and trained to minimize the square norm of the gradients computed on mini-batches. Let $\hat{g}_\theta^{\text{RELAX}}$ be the RELAX gradient estimator and recall $\hat{g}_\theta^{\text{A2C}}$ as the A2C gradient estimator. Define a generalized RELAX gradient as $\hat{g}_\theta^{\text{RELAX}}(\tau) = \tau \cdot \hat{g}_\theta^{\text{A2C}}  + (1-\tau) \cdot \hat{g}_\theta^{\text{RELAX}} $ for $\tau \in [0,1]$. When $\tau = 1$ we recover an A2C gradient estimator (which is still different from our original A2C gradient estimator since the baseline functions are parameterized and trained differently). When $\tau = 0$ we recover the original RELAX estimator. In practice we find that $\tau > 0$ tends to significantly outperform the pure RELAX estimator.

\begin{figure}[!h!]
\centering
\subfigure[\textbf{MountainCar }]{\includegraphics[width=.4\linewidth]{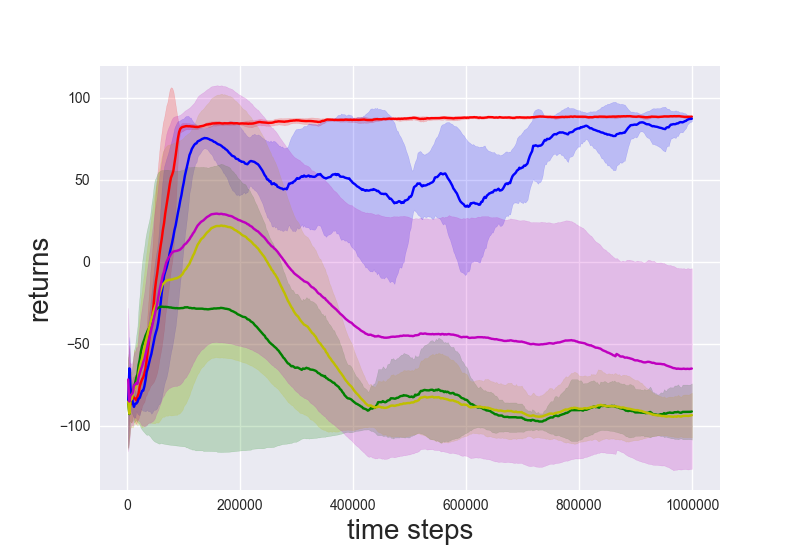}}
\subfigure[\textbf{Inverted Pendulum}]{\includegraphics[width=.4\linewidth]{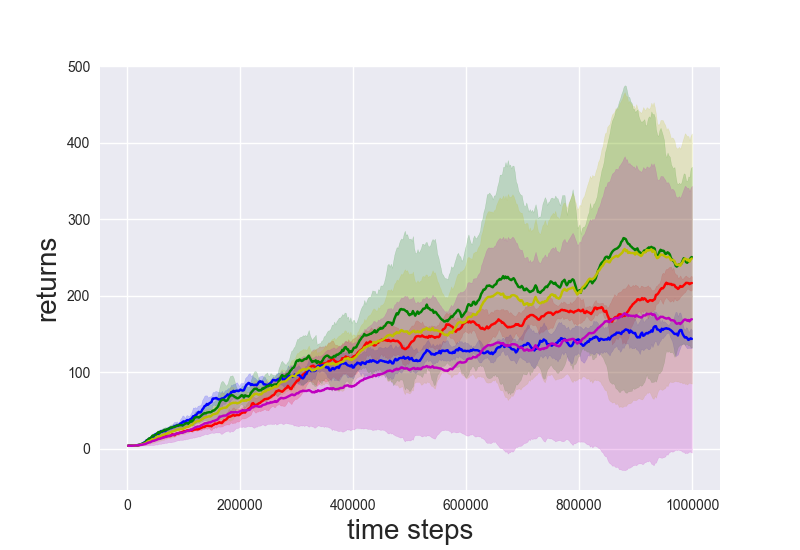}}
\subfigure[\textbf{Acrobot Swingup }]{\includegraphics[width=.4\linewidth]{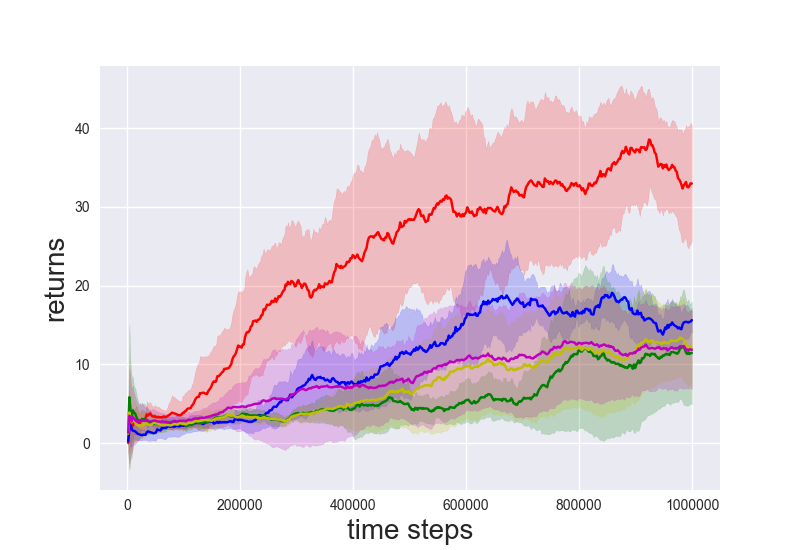}}
\subfigure[\textbf{Pendulum Swingup }]{\includegraphics[width=.4\linewidth]{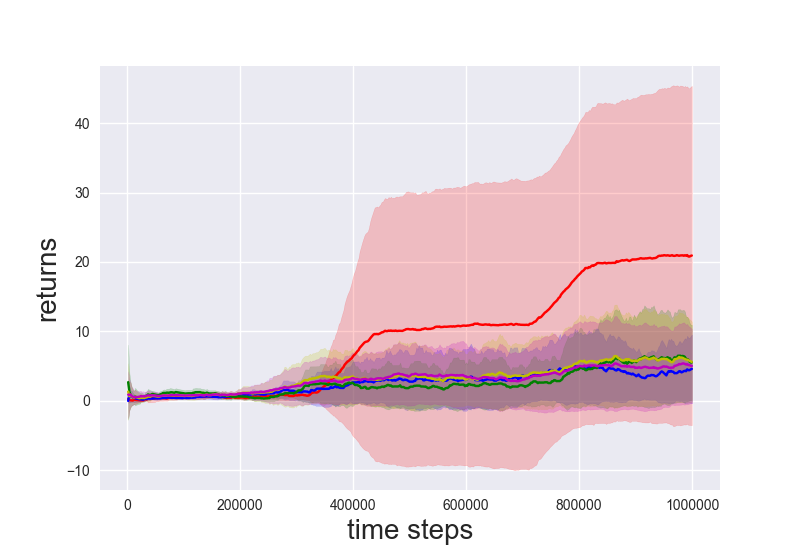}}
\subfigure[\textbf{CartPole v2 }]{\includegraphics[width=.4\linewidth]{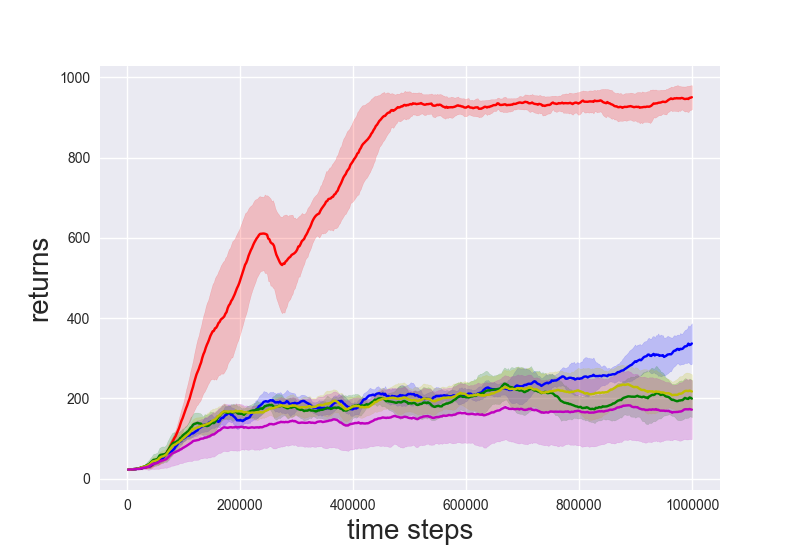}}
\subfigure[\textbf{CartPole v3 }]{\includegraphics[width=.4\linewidth]{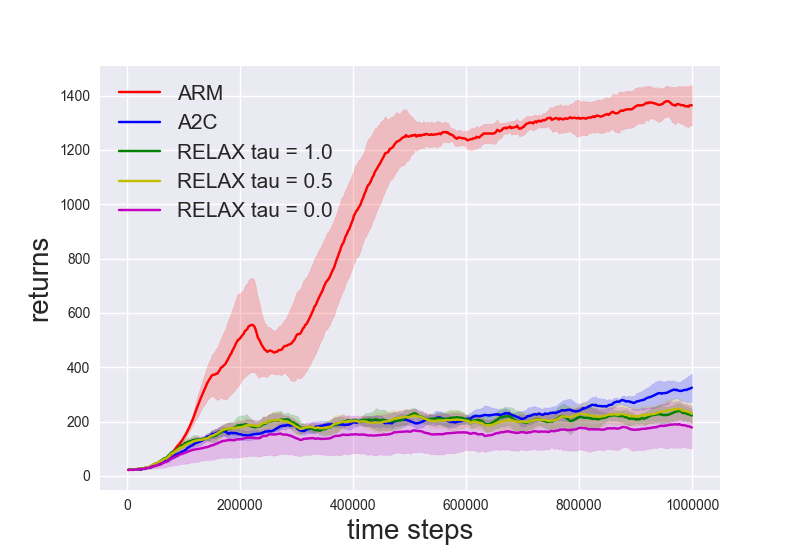}}
\caption{\small{Comparison of $\{\text{ARM},\text{A2C},\text{RELAX}\}$ gradient estimators for on-policy optimization, with A2C advantage estimation: Here RELAX gradient estimator is the generalized version with a combination coefficient $\tau \in [0,1]$ such that $\tau = 1$ recovers the original RELAX estimator while $\tau = 0$ recovers the A2C estimator. We observe that ARM gradient estimator consistently outperforms all gradient estimator baselines, both in terms of convergence rate and asymptotic performance.}}
\label{figure:a2c}
\end{figure}

In Figure \ref{figure:a2c}, we show the performance of $\{\text{ARM},\text{A2C}\}$ estimators along with RELAX with varying $\tau \in \{0, \frac{1}{2}, 1\}$. We see that the ARM policy gradient estimator outperforms the other baseline estimators on most tasks: except on Inverted Pendulum, where all estimators tend to learn slowly, while RELAX with $\tau \in \{\frac{1}{2},1\}$ perform the best in terms of mean performance, they do not significantly outperform others when accounting for the standard deviation. For other benchmark tasks, the ARM policy gradient estimator enables significantly faster policy optimization, while other baselines either become very unstable or learn very slowly.

\paragraph{Generalized Advantage Estimator (GAE).} GAE constructs advantage estimates using a more complex combination of Monte Carlo samples and baseline function, to better trade-off bias and variance. By construction, here the baseline function must be value function critic, hence we only evaluate $\{\text{ARM},\text{A2C}\}$.

In Figure \ref{figure:gae}, we show the comparison. We make several observations: (1) Comparing A2C with GAE, as shown in Figure \ref{figure:gae}, to A2C with A2C advantage estimator, as shown in Figure \ref{figure:a2c}, we see that in most cases A2C with GAE speeds up the policy optimization significantly. This demonstrates the importance of stable advantage estimation for policy gradient estimator. A notable exception is MountainCar, where A2C with A2C advantage estimator performs better. (2) Comparing ARM with A2C in Figure \ref{figure:gae}, we still see that the ARM estimator significantly outperforms A2C estimator. For almost all presented tasks, the ARM policy gradient estimator allows for much faster learning and significantly better asymptotic performance.

\begin{figure}[!t]
\centering
\subfigure[\textbf{MountainCar }]{\includegraphics[width=.4\linewidth]{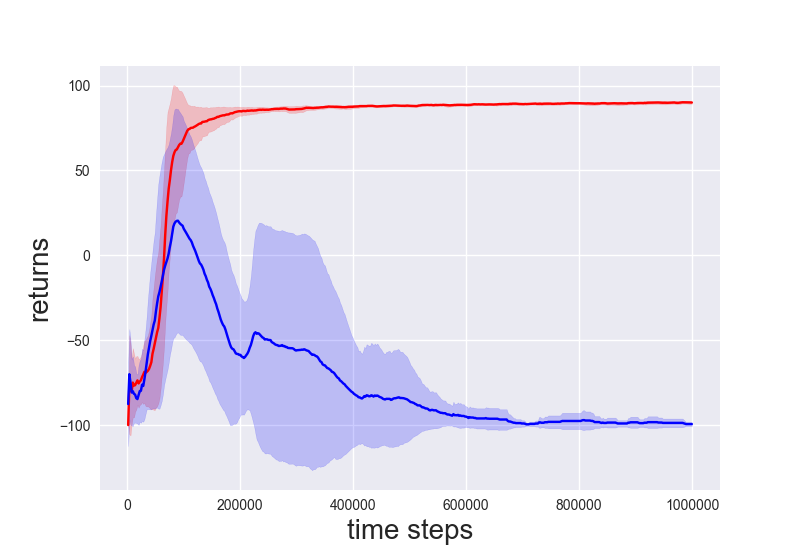}}
\subfigure[\textbf{Inverted Pendulum}]{\includegraphics[width=.4\linewidth]{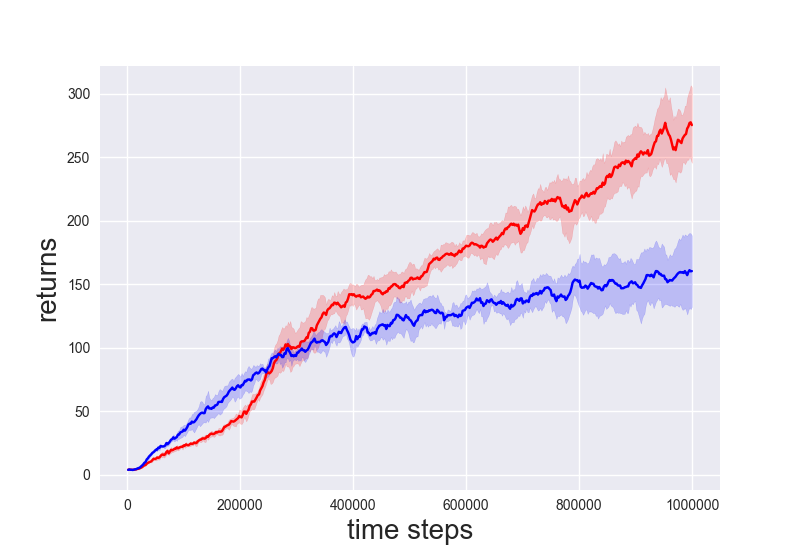}}
\subfigure[\textbf{Acrobot Swingup }]{\includegraphics[width=.4\linewidth]{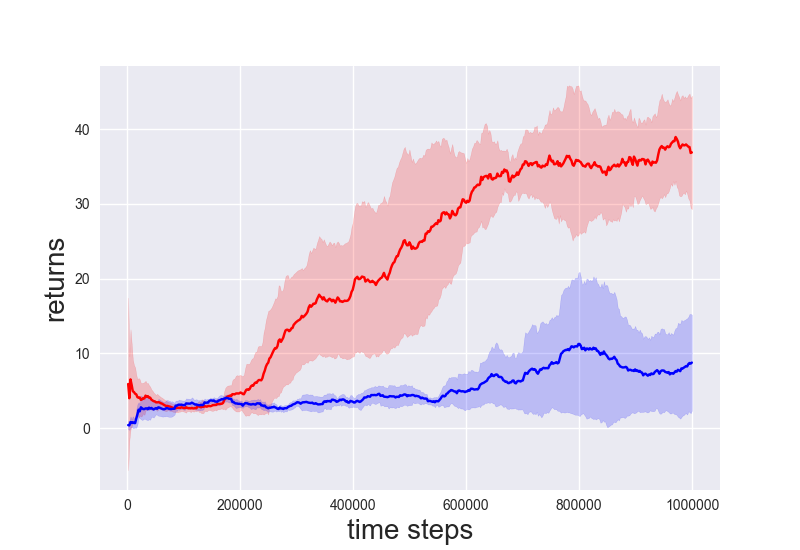}}
\subfigure[\textbf{Pendulum Swingup }]{\includegraphics[width=.4\linewidth]{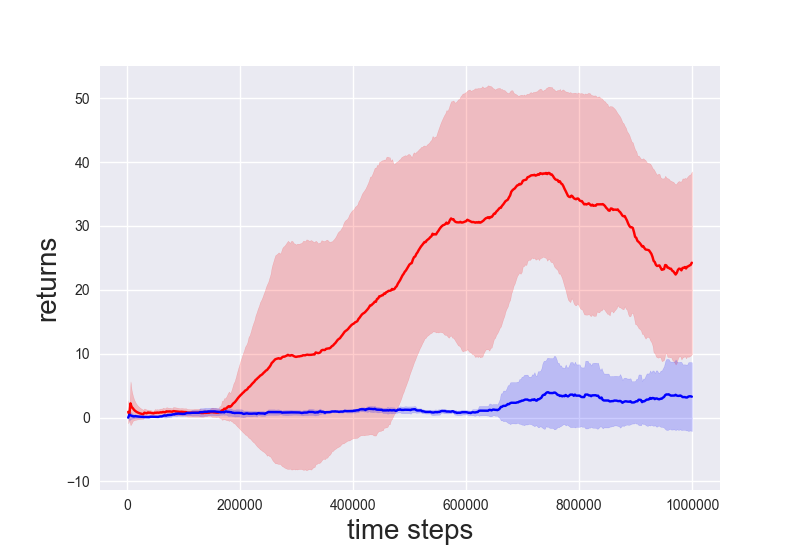}}
\subfigure[\textbf{CartPole v2 }]{\includegraphics[width=.4\linewidth]{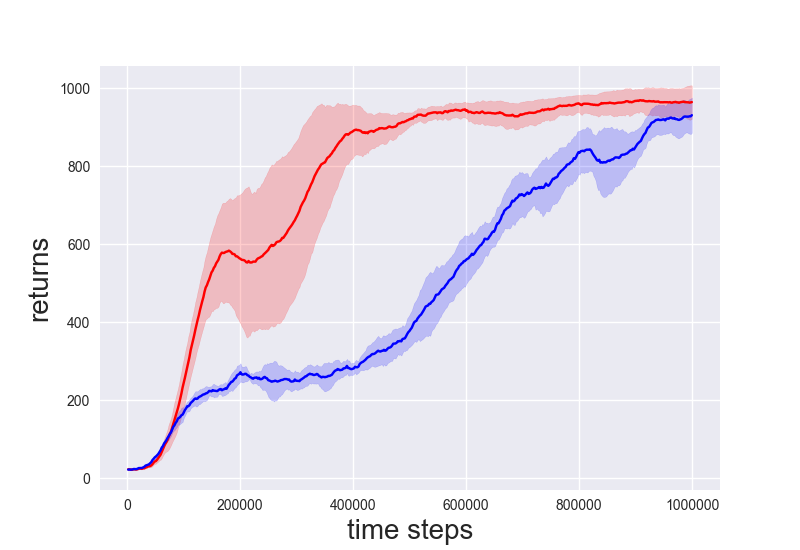}}
\subfigure[\textbf{CartPole v3 }]{\includegraphics[width=.4\linewidth]{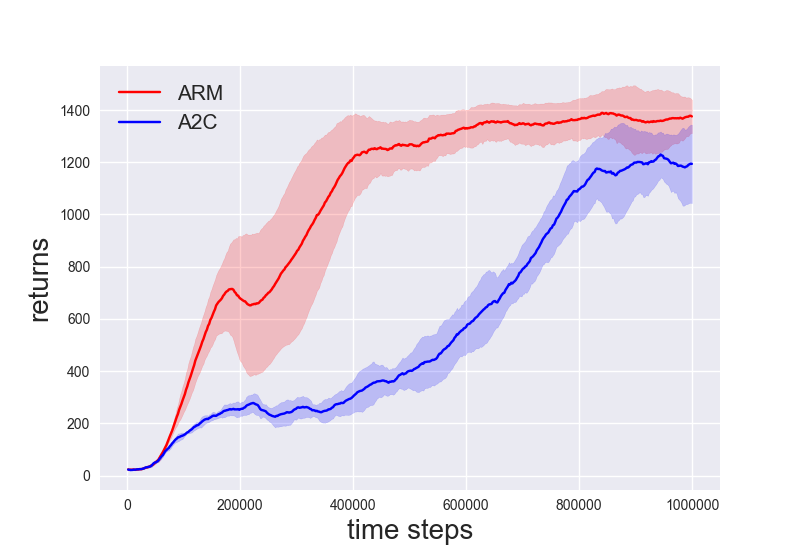}}
\caption{\small{Comparison of $\{\text{ARM},\text{A2C}\}$ gradient estimators for on-policy optimization, with GAE: We observe that ARM gradient estimator still consistently outperforms the A2C gradient estimator. For hard exploration environment MountainCar, ARM leads to fast convergence while A2C does not converge.}}
\label{figure:gae}
\end{figure}

\subsection{Effect of Batch Size}
On-policy gradient estimators are plagued by low sample efficiency, and we typically need many samples to construct high-quality gradient estimator. We vary the batch size $\mathcal{B} \in \{256,512,1024, 2048\}$ for each iteration and evaluate the final performance of $\{\text{ARM},\text{A2C}\}$ gradient estimators. Here, we use GAE as the advantage estimator. We fix the number of iterations (gradient updates) for training to be $\approx 488$ \footnote{This number of iteration is obtained via $\text{int}(\frac{10^6}{2048})$, where $10^6$ is the number of training steps and $2048$ is the default batch size.}. Under this setting, we expect the variance of the policy gradients to increase with decreasing $\mathcal{B}$ and so does the final performance.

\begin{figure}[!t]
\centering
\subfigure[\textbf{CartPole-v0 }]{\includegraphics[width=.35\linewidth]{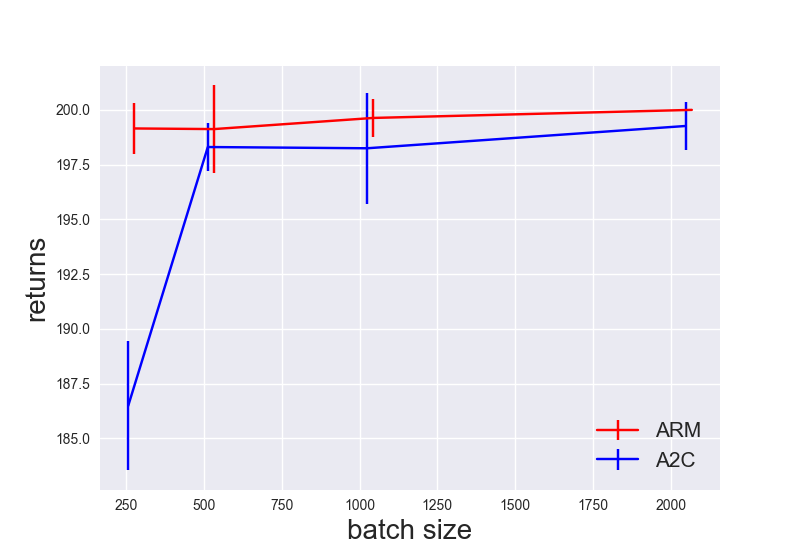}}
%\subfigure[\textbf{CartPole-v2 (T) }]{\includegraphics[width=.4\linewidth]{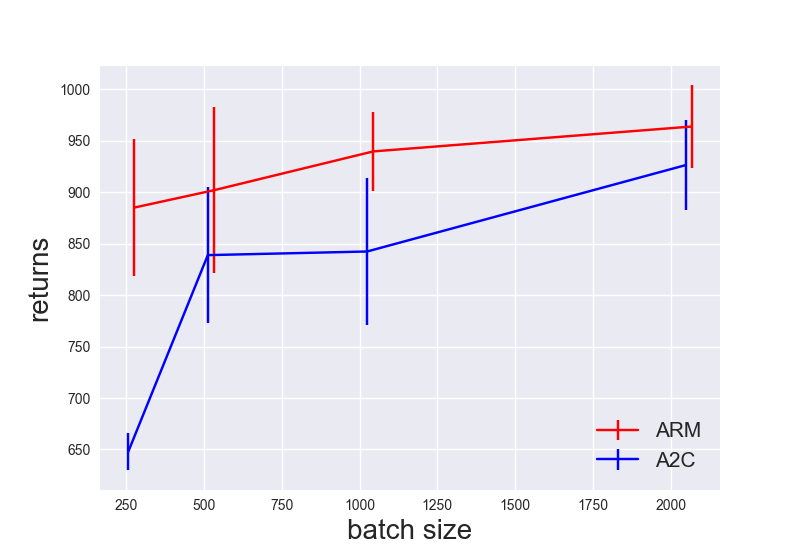}}
\subfigure[\textbf{CartPole-v2 }]{\includegraphics[width=.35\linewidth]{graph/cartpolev2_batchsize_fixediteration.png}}
\subfigure[\textbf{Pendulum Swingup }]{\includegraphics[width=.35\linewidth]{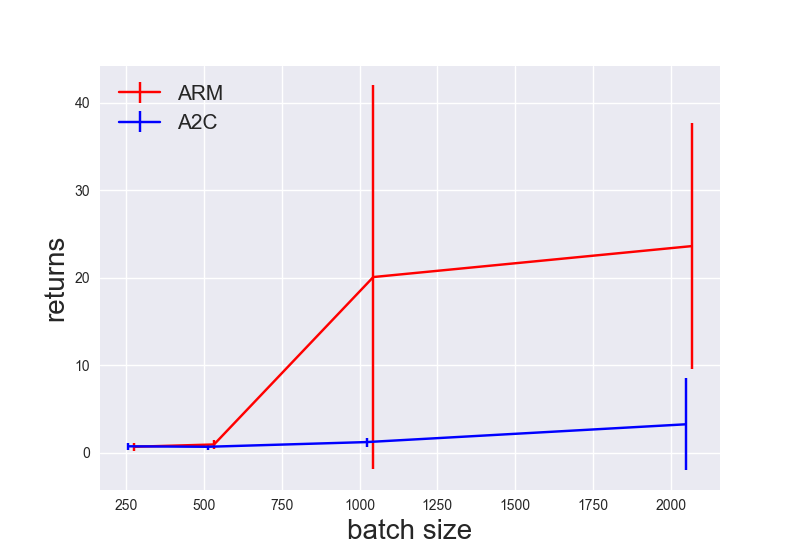}}
%\subfigure[\textbf{Pendulum Swingup (T) }]{\includegraphics[width=.4\linewidth]{graph/dmpendulumswingup_batchsize_fixediteration.png}}
\subfigure[\textbf{Acrobot Swingup }]{\includegraphics[width=.35\linewidth]{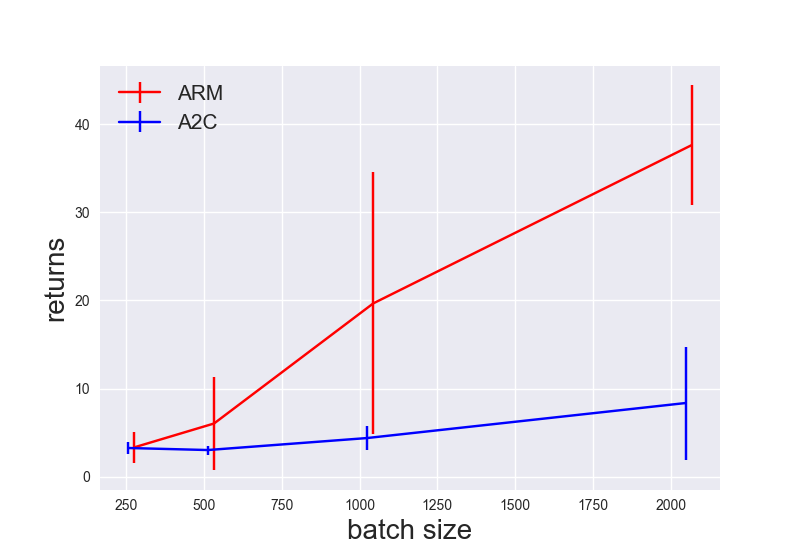}}
%\subfigure[\textbf{Acrobot Swingup (T) }]{\includegraphics[width=.4\linewidth]{graph/dmacrobotswingup_batchsize_fixediteration.png}}
\caption{\small{Comparison of $\{\text{ARM},\text{A2C}\}$ gradient estimators under various batch size $\mathcal{B}$: the number of iterations (gradient updates) are fixed. The final performance of the trained policy increases as the batch size decreases. We observe that across all presented tasks, the performance of ARM dominates that of A2C under all batch sizes. The x-coordinates of the ARM curve are slightly misaligned to separate from the A2C curve.}}
\label{figure:batchsize}
\end{figure}

In Figure \ref{figure:batchsize}, we show the performance of policies trained via $\{\text{ARM},\text{A2C}\}$ gradient estimators. The x-axis show the batch size $\mathcal{B}$ while y-axis show the cumulative returns of the last 10 training iterations (with standard deviation across 5 seeds). We see that as expected the final performance generally increases as we have larger batch size $\mathcal{B}$. Across all presented tasks, the performance of the ARM gradient estimator significantly dominates that of the A2C gradient estimator. This is also compatible with results in Figures \ref{figure:a2c} and  \ref{figure:gae}, where the ARM policy gradient estimator achieves convergence with significantly fewer number of training iterations.

\subsection{Sensitivity to Hyper-parameters}

\begin{figure}[!t]
\centering
\subfigure[\textbf{CartPole v0 }]{\includegraphics[width=.3\linewidth]{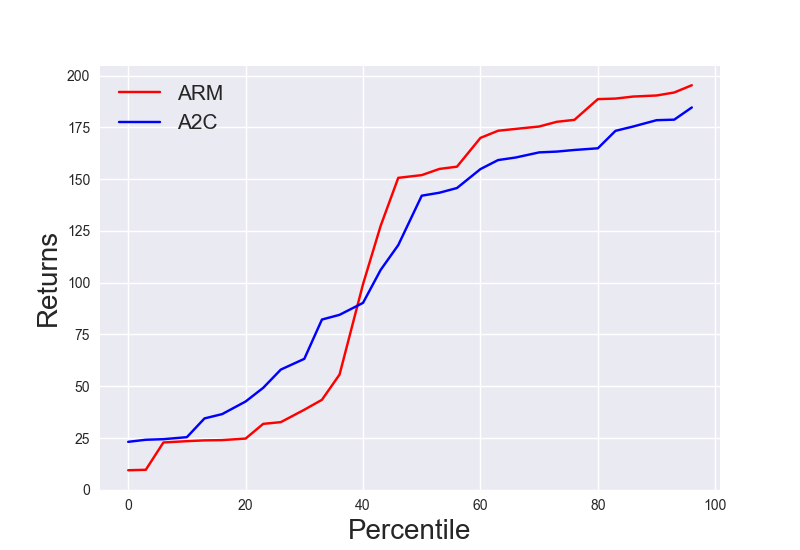}}
\subfigure[\textbf{CartPole v3 }]{\includegraphics[width=.3\linewidth]{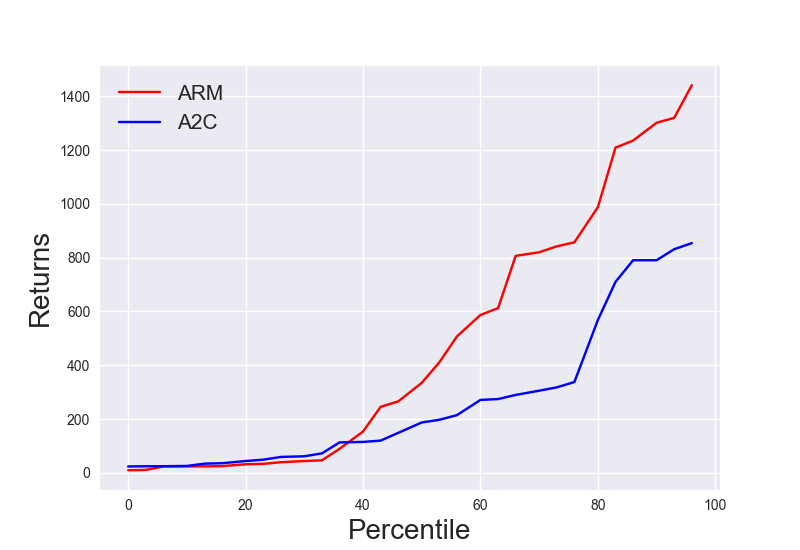}}
\subfigure[\textbf{MountainCar }]{\includegraphics[width=.3\linewidth]{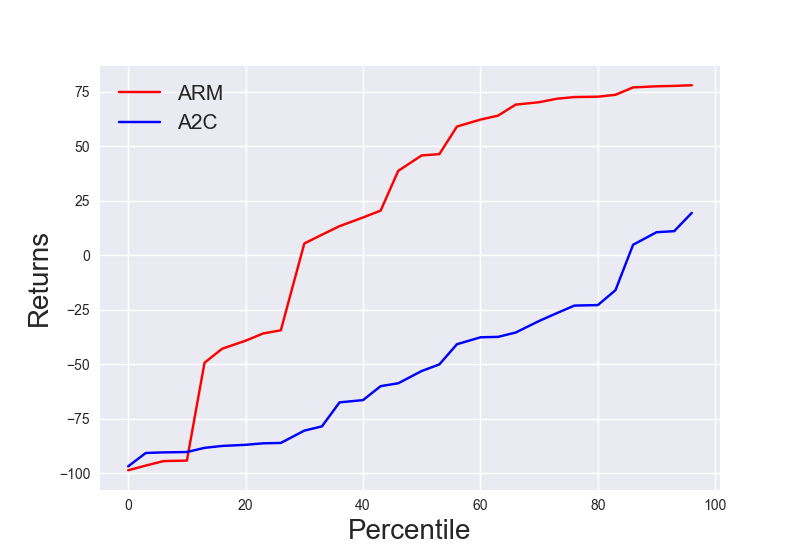}}
\subfigure[\textbf{Inverted Pendulum }]{\includegraphics[width=.3\linewidth]{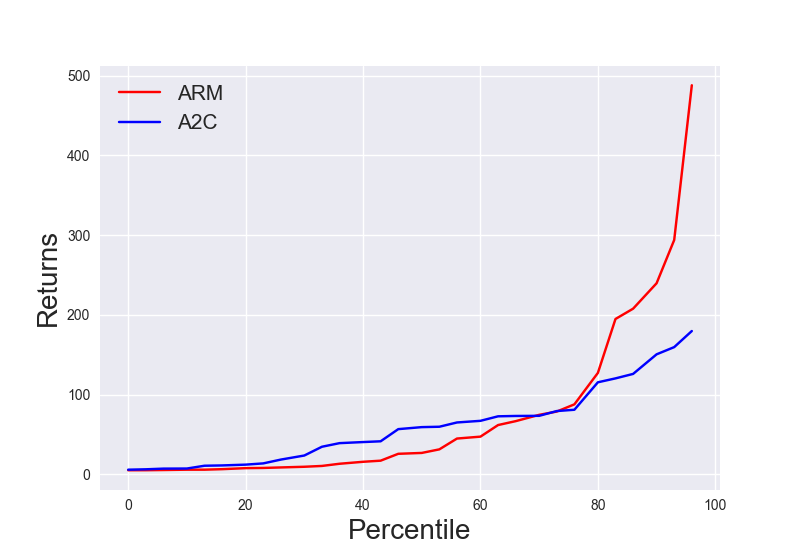}}
\subfigure[\textbf{Acrobot Swingup }]{\includegraphics[width=.3\linewidth]{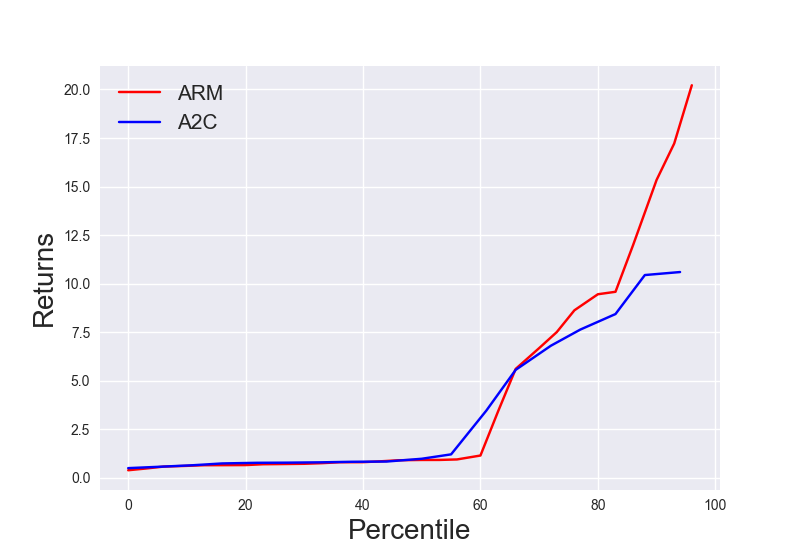}}
\subfigure[\textbf{Pendulum Swingup }]{\includegraphics[width=.3\linewidth]{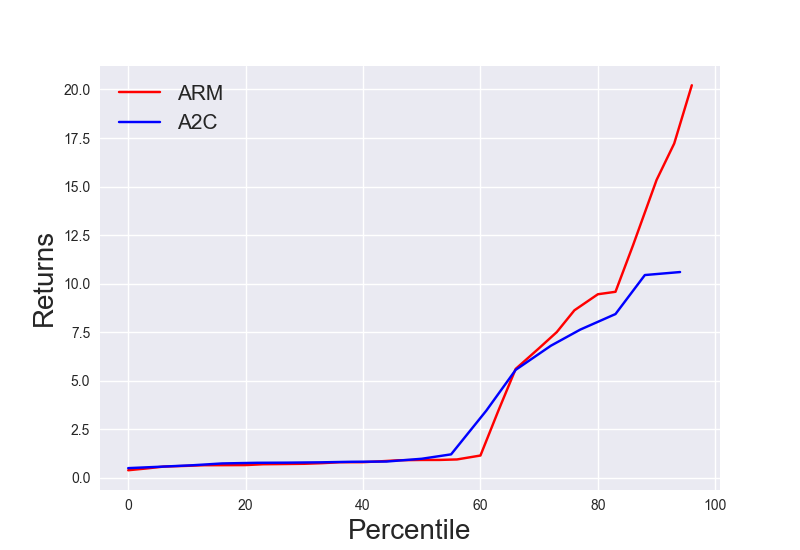}}
\caption{\small{Comparison of $\{\text{ARM},\text{A2C}\}$ gradient estimators under various learning rate and random initializations: We show the quantile plots of $30$ distinct hyper-parameter configurations. ARM gradient estimator is generally more robust to A2C gradient estimator across presented tasks.}}
\label{figure:quantile}
\end{figure}

In addition to batch size, we evaluate the policy gradient estimators' sensitivity to hyper-parameters such as learning rate and random initialization of parameters. In the following setting, we uniformly at random sample log learning rate $\log \alpha \in [-3,-6]$ and parameter initialization from $5$ settings ($5$ random seeds). We train policies under each hyper-parameter configuration for $10^6$ steps and record the performance of the last $50$ iterations. For each policy gradient estimator in $\{\text{ARM},\text{A2C}\}$, we sample $30$ distinct hyper-parameter configurations and plot the quantile plots of their performance in Figure \ref{figure:quantile}. In general, we see that ARM gradient estimator is more robust than A2C gradient estimator across all presented tasks.

\section{Conclusion}
We propose the ARM policy gradient estimator as a convenient low-variance plug-in alternative to prior baseline on-policy gradient estimators, with a simple on-policy optimization algorithm for tasks with binary action space. We leave the extension to more general discrete  action space as exciting future work.

%\newpage
\bibliography{ARMPG_v2}
\bibliographystyle{iclr2019_conference}

\appendix
\onecolumn

\section{Further Experiment Details}
\subsection{CartPole Environment Setup}
The CartPole experiments are defined by an environment parameter $T$ which specifies that the agent can achieve a maximum rewards of $T$ ($i.e.$,  balance the system for $T$ time steps before the episode terminates).

In our main experiments, we set $T=200$ for CartPole-v0, $T=500$ for CartPole-v1, $T=1000$ for CartPole-v2 and $T=1500$ for CartPole-v3. The difficulty increases with $T$: with large $T$, the agent is less likely to obtain many full trajectories within a single iteration, making it more difficult for return based estimation; long horizons also make it hard for policy optimization.

\subsection{Advantage Estimator}
We consider two popular advantage estimators widely in use \citep{mnih2016,schulman2015high} for on-policy optimization applications. The objective of both estimators are to approximate the advantage function $A^\pi(s,a)$ under current policy $\pi$.

\paragraph{A2C Advantage Estimator.} We construct the A2C estimator at time $t$ with the following
\begin{align}
    \hat{A}^\pi(s_t,a_t) = \hat{Q}^\pi(s_t,a_t) - V_\phi(s_t),
    \label{eq:a2cestimator}
\end{align}
where $\hat{Q}^\pi(s_t,a_t) = \sum_{l\geq 0} r_{t+l} \gamma^l$ are Monte-Carlo estimates of the partial sum of returns (along the sampled trajectories). The critic $V_\phi(s)$ is trained by regression over the partial returns to approximate the value function $V_\phi(s) \approx V^\pi(s)$.

\paragraph{Generalized Advantage Estimator (GAE).} GAE is indexed by two parameters $\gamma$ and $\lambda$, where $\gamma\in(0,1)$ is the discount factor and $\lambda\in(0,1]$ is an additional trace parameter that determines bias-variance trade-off of the final estimation. We define TD-errors 
\begin{align}
    \delta_t^V = r_t + \gamma V_\phi(s_{t+1}) - V_\phi(s_t),
\end{align}
where $V_\phi(s) \approx V^\pi(s)$ is a value function critic trained by regression over returns. GAE at time $t$ is computed as a weighted average of TD-errors across time
\begin{align}
    \hat{A}^{\pi}(s_t,a_t) = \sum_{l=0}^\infty (\gamma\lambda)^l \delta_{t+l}^V.
    \label{eq:gaeestimator}
\end{align}
Though the optimal $\lambda$ parameter is problem dependent, a common practice is to set $\lambda = 0.95$. 

\subsection{Baseline Implementations}
We have compared the ARM policy gradient estimator with A2C policy gradient estimator and recently proposed RELAX gradient estimator. We implement three policy gradient estimators based on OpenAI baselines \citep{baselines}. Though each policy gradient estimator requires more or less different implementation variations ($e.g.$,  record the pseudo action $a_t^{(s)}$ and random noise $u_t$ for the ARM policy gradient), we have ensured that these three implementations share as much common structure as possible.

We note that though the RELAX code \citep{grathwohl2017backpropagation} is made available, and their code is built on top of the OpenAI baselines. We did not directly run their code because of potential issues in their original implementation of RELAX. We implement our own version of RELAX and note some of the differences from their code.

\paragraph{Difference 1: Policy gradient computation.}  In the following we point out potential issues with the implementation of \citep{grathwohl2017backpropagation} and we refer to the the latest commit to the RELAX repository\footnote{\url{https://github.com/wgrathwohl/BackpropThroughTheVoidRL}} (commit 0e6623d) as of this writing. 

Recall that on-policy optimization algorithms alternate between performing rollouts and performing updates based on the rollout samples. At rollout time, $\tau$ on-policy samples $\{s_t,a_t,r_t\}_{t=0}^{\tau-1}$ are collected. Recall that the A2C policy gradient estimator takes the following form
\begin{align}
\hat{g}_\theta^{\text{A2C}} = \frac{1}{\tau} \sum_{t=0}^{\tau-1} A_t \nabla_\theta \log \pi_\theta(a_t|s_t), \label{eq:a2csample}
\end{align}
where $A_t$ are the advantage estimators from the on-policy samples. Importantly in (\ref{eq:a2csample}), the actions $a_t$ should be the on-policy samples - the intuition is that on-sample actions $a_t$ match their corresponding advantages $A_t$, if $A_t > 0$ for a certain action $a_t$ in state $s_t$, the gradient update will increase the probability $\pi_\theta(a_t|s_t)$. In some cases, one implements the A2C gradient estimator by re-sampling actions $a_t^{(r)}$ at each $s_t$ during training, resulting in the following estimator
\begin{align}
\hat{g}_\theta^{\text{A2C(r)}} = \frac{1}{\tau} \sum_{t=0}^{\tau-1} A_t \nabla_\theta \log \pi_\theta(a_t^{(r)}|s_t). \label{eq:a2cresample}
\end{align}
We remark that this new gradient estimator with re-sampled actions $a_t^{(r)}$ is biased, i.e. $\mathbb{E}[\hat{g}_\theta^{\text{A2C(r)}}] \neq \mathbb{E}[\hat{g}_\theta^{\text{A2C}}] = g_\theta$. The bias comes from the fact that when $a_t \neq a_t^{(r)}$, we assign the advantage $A_t$ to the wrong actions $a_t^{(r)}$, leading to mismatched credit assignments. We find the latest version of RELAX code \citep{grathwohl2017backpropagation} implements this biased estimator (\ref{eq:a2cresample}) - in fact they implement the biased estimator for both the A2C baseline and their proposed RELAX gradient estimator. 

Specifically, in Tensorflow terminology \citep{abadi2016tensorflow}, the advantages $A_t$, actions $a_t$ and states $s_t$ should be input into the loss and gradient computation via \emph{placeholders}. However, in the RELAX implementation, they input advantages $A_t$ and states $s_t$ via \emph{placeholders}, while inputting the actions via \emph{train\_model.a0} where \emph{a0} stands for actions sampled from the policy network \emph{train\_model}. In practice, this will cause the policy model to re-sample an independent set of actions, leading to biased estimates. The re-sampling bias is severe when $a_t \neq a_t^{(r)}$, especially when the policy is still random during the initial stage of training. Later in the training when the policy becomes more deterministic, the bias decreases since it is more likely that $a_t = a_t^{(r)}$.  In our implementation, we correct such potential bugs.

\paragraph{Difference 2: Average gradients over states not trajectories.} In the original development of RELAX, policy gradients are computed per trajectories and averaged across multiple trajectories. A common practice in on-policy algorithm implementation \citep{baselines} is to average policy gradients across states. We follow this latter practice. As a result, we can collect a fixed number of steps per iteration instead of a fixed number of rollouts (which can result in varying number of steps) as in the original work \citep{grathwohl2017backpropagation}. We believe that such practice allows for fair comparison.

%##########

\section{Algorithm}
In the pseudocode below, we omit the training of value function baseline $V_\phi(s) \approx V^\pi(s)$. Following the common practice \citep{schulman2015,mnih2016,schulman2017}, the value function baseline is trained by regression over Monte Carlo returns.

\begin{algorithm}[H]
\begin{algorithmic}[1]
\STATE\textbf{Input}: total number of time steps $T$, training batch size $\tau$, learning rate for policy $\alpha$, discount factor $\gamma$ \; \\
\STATE\textbf{Initialize}: Policy network $\pi_\theta(s)$ with parameters $\theta$.

\FOR {$t=1, 2, ... T$} 
		\STATE // \emph{Rollout}
		\STATE
		1. In state $s_t$, compute logit  $\mathcal{T}_\theta(s_t)$. Sample $u_t \sim \mathcal{U}(0,1)$. If $u \leq \sigma(\mathcal{T}_\theta(s_t))$, let the action $a_t = 1$ and otherwise $a_t = 0$; if $u \geq \sigma(-\mathcal{T}_\theta(s_t))$, let the shadow action $a_t^{(s)} = 1$ and otherwise $a_t^{(s)} = 0$. Receive instant reward $r_t$. Label $\pi_t = \sigma(\mathcal{T}_\theta(s_t))$.
		
		\IF{$\text{counter}\  \text{mod}\  \tau == 0$}
		\STATE // \emph{Training policy}
		
		\STATE 2. Estimate advantage function for each step  $\{A_i\}_{i=0}^{\tau-1}$ from the collected $\tau$ sample tuples labeled as $\{s_i,a_i,a_i^{(s)},r_i,u_i,\pi_i\}_{i=0}^{\tau-1}$.

		\STATE 3. Construct differences of advantage functions at each step based on (\ref{eq:adv})
		\begin{align}
		  D_i = (\frac{\pi_i}{1-\pi_i} + 1) (u_i - \frac{1}{2}) A_i \cdot |a_i^{(s)} - a_i| . \nonumber
		\end{align}
		
		\STATE  4. Construct the surrogate loss function
		\begin{align}
		    L_p = -\frac{1}{\tau} \sum_{i=0}^{\tau-1} D_i \mathcal{T}_\theta(s_i).
		\nonumber
		\end{align}	
		\STATE 5. Update $\theta \leftarrow \theta - \alpha \nabla_\theta L_p$.
		\ENDIF	
 \ENDFOR
 \caption{On-Policy Optimization with ARM Policy Gradient}
\label{Alg:arm}
\end{algorithmic}
\end{algorithm}

\section{Further Discussions on ARM Policy Gradient Estimator}

For simplicity we fix a state $s$ and use a simplified notation: $A_1 = A^\pi(s,a=1)$, $A_0 = A^\pi(s,a=0)$, $\pi_1 = \pi(a=1\,|\,s) = \frac{\exp(\phi)}{1+\exp(\phi)}$, and $\pi_0 = 1-\pi_1$. Here the logit $\phi$ is parameterized as $\phi = \phi(\theta)$ by parameter $\theta$. Let $u \sim \mathcal{U}(0,1)$ be the noise used for generating actions such that $a = 1$ if $u < \sigma(\phi)$ and the pseudo action $a^{(s)} = 1$ if  $u > \sigma(-\phi)$. Without loss of generality, assume $\phi > 0$.

The ARM policy gradient estimator can be simplified to be the following
\begin{align}
    g_{\theta}^{\text{ARM}} &= (u - \frac{1}{2}) [(A_1 - A_0) \mathbf{1}_{\{u > \sigma(\phi)\}} %\nonumber \\ &
    + (A_0 - A_1)  \mathbf{1}_{\{u < \sigma(-\phi)\}} ] \nabla_\theta \phi.
    \label{eq:armsimplified}
\end{align}
The intuition for the estimator is clear: when $A_1 > A_0$, the above expression always updates $\phi$ such that $\pi_1$ is increased, whenever $u > \sigma(\phi)$ or $u< \sigma(-\phi)$. Then estimator is exactly zero when $\sigma(-\phi) \leq u \leq \sigma(\phi)$, which will become frequent as the policy becomes less entropic during training.

The A2C gradient estimator is the following
\begin{align}
    g_{\theta}^{\text{A2C}} = [A_1 \pi_0 \mathbf{1}_{\{u \leq \sigma(\phi)\}} - A_0 \pi_1 \mathbf{1}_{\{u > \sigma(\phi)\}} ] \nabla_\theta \phi.     \label{eq:a2csimplified}
\end{align}

%We can construct lower-variance variant of the A2C gradient estimator with antithetic sampling in $u$
%\begin{align}
%    g_{\theta}^{\text{anti-A2C}} &= \frac{1}{2}[(A_1 \pi_0  - A_0 \pi_1) \mathbf{1}_{\{u > \sigma(\phi)\  \text{or}\ u < \sigma(-\phi)\}} \nonumber \\ &+ A_1 \pi_0 \mathbf{1}_{\{\sigma(-\phi) \leq u \leq \sigma(\phi)\}} ] \nabla_\theta \phi.
%    \label{eq:a2cantisimplified}
%\end{align}
The intuition for updates are also clear for (\ref{eq:a2csimplified}): for example when $A_1 > 0$ (which implies $A_0 < 0$), whenever $u \leq \sigma(\phi)$ or $u > \sigma(\phi)$, the gradient always updates the parameter such that $\pi_1$ increases. However, %for both A2C variants, 
due to the A2C gradient form (\ref{eq:a2csimplified}), there are no random noise $u$ that leads to exact zero gradient estimators. This tends to make the updates less stable because even when near a local optima the parameters can still oscillate due to noisy estimates of gradient updates.

We could also compute the expected gradient directly
\begin{align}
    \mathbb{E}[g_{\theta}] = [A_1 - A_0] \pi_1 \pi_0 \nabla_\theta \phi,    \label{eq:expected}
\end{align}
Since computing the expected gradients also requires advantage estimators (by replacing $A_i$ by their estimators $\hat{A}_i$), we call the sample estimate of (\ref{eq:expected})  the expected gradient estimators. Although the expected gradient estimator analytically computes the policy gradient, it suffers a similar issue as the A2C gradient estimator - due to noisy estimates of the advantages, the estimator in (\ref{eq:expected}) can never take on exactly zero values. In practice, as we will show below, when the learning rate is fixed, this causes the policy parameter to oscillate due to noisy estimates and leads to unstable learning.

\subsection{Further Experiments Comparing Expected Gradient Estimators}
Though the expected gradient estimators analytically compute the policy gradient for parameter updates, it can still suffer from noisy estimates of the advantage function. To be concrete, we illustrate such phenomenon in Figure \ref{figure:expected}, where we evaluate gradient estimators $\{\text{ARM,A2C}\}$ along with expected gradient estimators. The advantage estimators are A2C advantage estimators. We choose the learning rate $\alpha \in \{3\cdot10^{-6},1\cdot 10^{-5},3\cdot 10^{-5}\}$ for the expected gradient estimators and $\alpha = 3\cdot 10^{-5}$ for $\{\text{ARM,A2C}\}$. 

For Figure \ref{figure:expected} (a)(c)(d), we show the learning rate $\alpha = 3\cdot 10^{-5}$ result for the expected gradient estimator since it achieves the best performance and is comparable with the ARM policy gradient estimator. In these cases, we see that the achieve slightly faster rate of convergence than others, with comparable asymptotic performance in (a)(c). In (d), the asymptotic performance of the expected gradient estimator suffers a bit.

For Figure \ref{figure:expected} (b)(e)(f), we show the training curves corresponding to all the learning rates of the expected gradient estimators. In these plots, the expected gradient estimators achieve fast learning during the initial stage of training, yet its performance quickly suffers as a result of unstable learning (notice the sudden drops in performance). Turning down the learning rate from $3\cdot 10^{-5}$ to $3\cdot 10^{-6}$ slightly alleviates such issues at the cost of slower convergence. To fully remedy such problems, we speculate that it is necessary to either introduce an annealing scheme in the learning rate so as to avoid the unstable training, or introduce trust-region based methods to stabilize updates \citep{schulman2015,schulman2017proximal}. On the other hand, the ARM policy gradient estimator consistently achieves stable policy optimization without additional efforts of learning rate annealing and trust-region based techniques.

\begin{figure}[!t]
\centering
\subfigure[\textbf{MountainCar }]{\includegraphics[width=.4\linewidth]{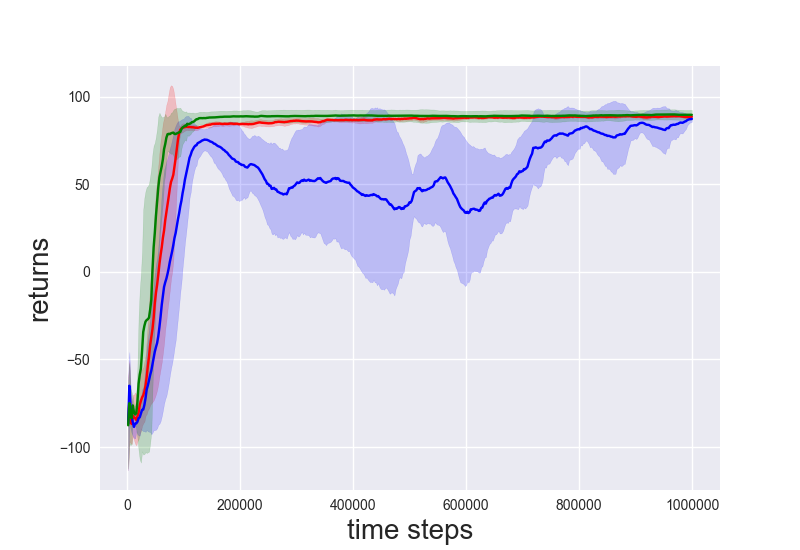}}
\subfigure[\textbf{Inverted Pendulum}]{\includegraphics[width=.4\linewidth]{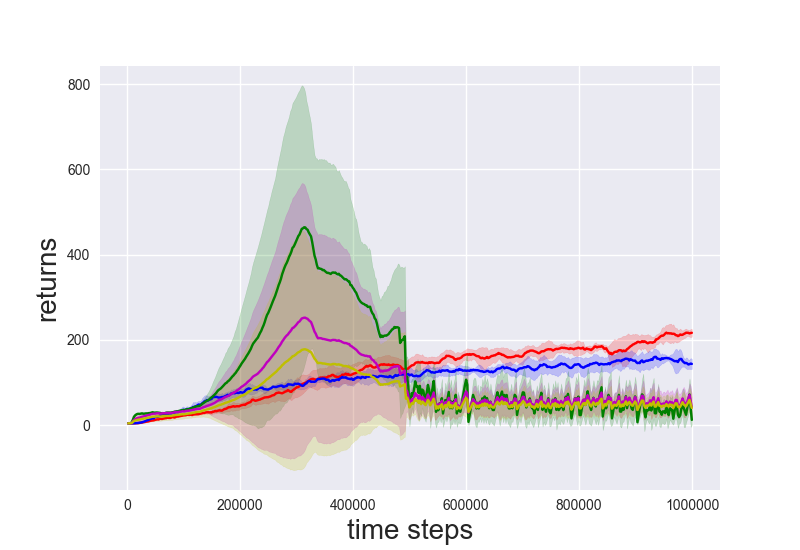}}
\subfigure[\textbf{Acrobot Swingup }]{\includegraphics[width=.4\linewidth]{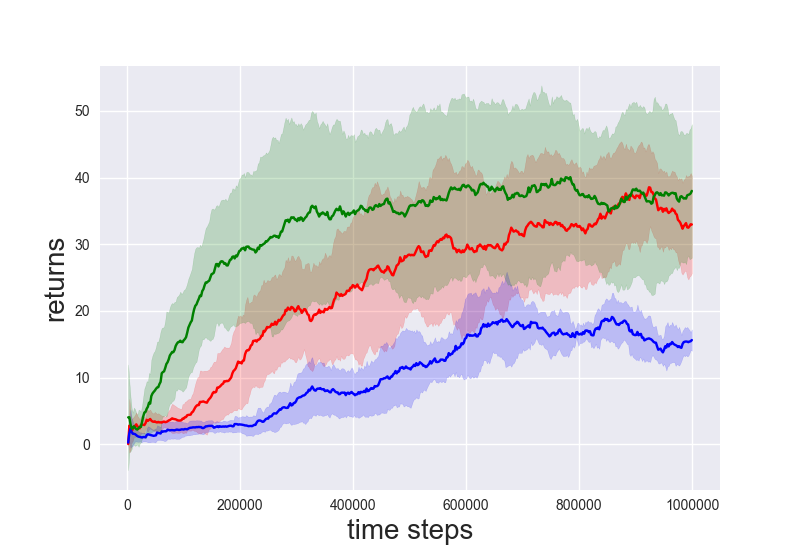}}
\subfigure[\textbf{Pendulum Swingup }]{\includegraphics[width=.4\linewidth]{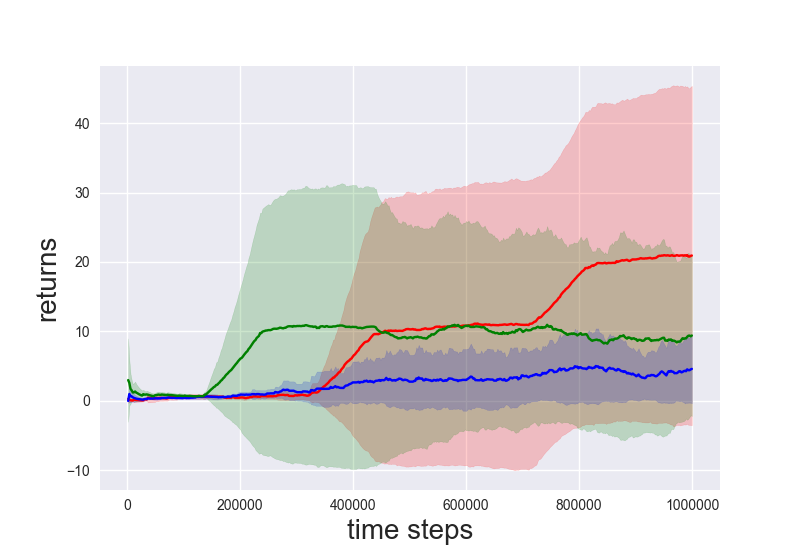}}
\subfigure[\textbf{CartPole v2 }]{\includegraphics[width=.4\linewidth]{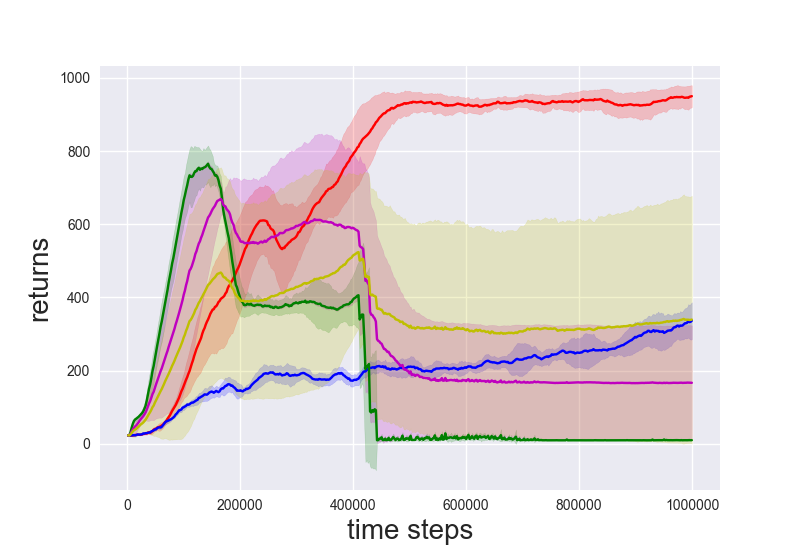}}
\subfigure[\textbf{CartPole v3 }]{\includegraphics[width=.4\linewidth]{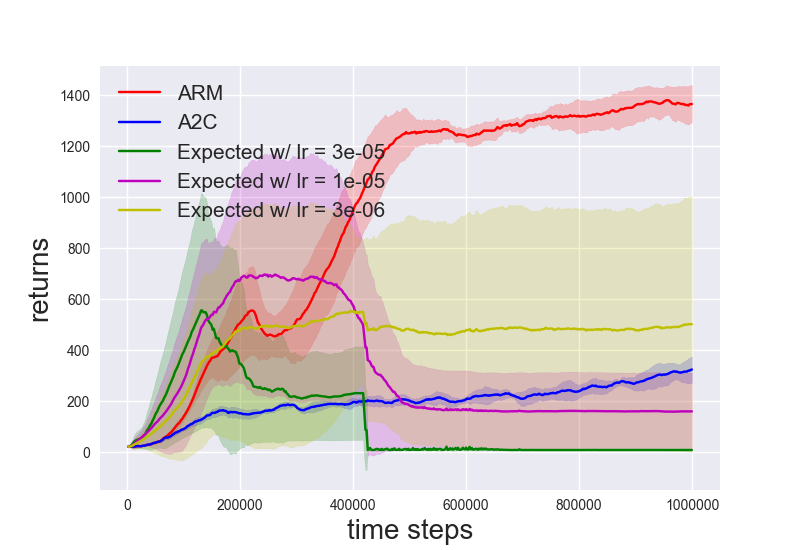}}
\caption{\small{Comparison of $\{\text{ARM},\text{A2C}\}$ gradient estimators and expected gradient estimators with various learning rates for on-policy optimization , with A2C advantage estimator: for the expected gradient estimators we tune the learning rate $\alpha \in \{3\cdot10^{-6},1\cdot 10^{-5},3\cdot 10^{-5}\}$. In (a)(c)(d), We only show the $\alpha = 3\cdot10^{-5}$ result, which achieves the best comparable performance with ARM; in (b)(e)(f), we show curves corresponding to all learning rates since no learning rates can achieve stable optimization - the performance drastically increases initially and quickly descends due to unstable policy optimization.}}
\label{figure:expected}
\end{figure}
\end{document}